\documentclass{article}

\usepackage{PRIMEarxiv}

\usepackage[utf8]{inputenc} 
\usepackage[T1]{fontenc}   
\usepackage{amsmath,amsfonts}
\usepackage{algorithm}
\usepackage{array}
\usepackage[caption=false,font=normalsize,labelfont=sf,textfont=sf]{subfig}
\usepackage{textcomp}
\usepackage{stfloats}
\usepackage{url}
\usepackage{verbatim}
\usepackage{graphicx}
\usepackage{cite}
\usepackage[T1]{fontenc}
\usepackage{graphicx}
\usepackage{amsmath}
\usepackage{xcolor}
\usepackage{amssymb}
\usepackage{amsthm}
\usepackage{float}
\usepackage{hyperref}
\usepackage{enumitem}
\usepackage{booktabs}
\usepackage{amsmath,amssymb,amsfonts}
\usepackage{mathtools}
\usepackage{booktabs}
\usepackage{array}
\usepackage{multirow}
\usepackage{hyperref}
\usepackage{xcolor}
\usepackage{algorithm}
\usepackage{algpseudocode}
\usepackage{float}
\usepackage{caption}
\usepackage{graphicx}
\allowdisplaybreaks     
\newtheorem{theorem}{Theorem}[section]
\usepackage[numbers]{natbib}

\title{Chance-Constrained Bi-Objective Evolutionary Optimization for the Open-Pit Mining Operational Planning Problem}

\author{ \href{https://orcid.org/0000-0002-8043-994X}{\includegraphics[scale=0.06]{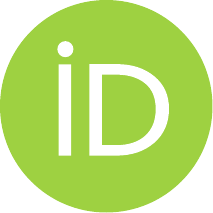}\hspace{1mm}Ishara Hewa Pathiranage} \\
	Machine Learning and Optimisation, \\School of Computer Science and \\Information Technology,\\
  Adelaide University,\\ Adelaide, Australia 
	\And
	\href{https://orcid.org/0000-0002-0036-4782}{\includegraphics[scale=0.06]{orcid.pdf}\hspace{1mm}Aneta Neumann} \\
	Machine Learning and Optimisation, \\
    School of Computer Science and\\ Information Technology,\\
  Adelaide University,\\ 
  Adelaide, Australia \\
}

\begin{document}

\maketitle

\begin{abstract}
Open-pit mining operational planning involves allocating limited resources while satisfying production, equipment, and ore-quality requirements. Existing approaches often assume deterministic ore grades or rely on simulation-based evaluation under uncertainty. In this paper, we propose a chance-constrained bi-objective formulation of the open-pit mining operational planning problem under uncertain ore grades. We aim to maximize ore production and minimize fleet cost while satisfying stochastic quality requirements through chance constraints. We model ore grades as independent normally distributed random variables and derive a deterministic reformulation of the two-sided chance constraints, avoiding sampling or simulation during fitness evaluation. We evaluate four multi-objective evolutionary algorithms on benchmark instances under different levels and structures of uncertainty. Our results show that NSGA-II and NSGA-III generally obtain the best feasible ore-production values, with NSGA-II requiring less computational time. GSEMO has the lowest computational cost but obtains feasible solutions less consistently and generally provides lower solution quality. We also observe that higher confidence and uncertainty levels reduce feasibility, with the effect depending on the uncertainty structure. The results highlight the importance of considering both the magnitude and structure of uncertainty in short-term open-pit operational planning.
\end{abstract}

\keywords{
Open-Pit mining operational planning problem \and Multi-objective evolutionary algorithms \and Uncertainty \and Chance constraints
}

\section{Introduction}

Open-pit mining operations require efficient planning and resource allocation to maximize production while satisfying operational and quality constraints. The open-pit mining operational planning (OPMOP) problem considers short-term decisions, often made for a shift or a day. These decisions include determining extraction rates from ore and waste fronts and assigning loading and hauling equipment, such as shovels and trucks, to these locations. The objective is to meet production and quality requirements while reducing operational costs, particularly those associated with the truck fleet.

Short-term mine planning is important because it directly affects daily production and equipment utilization. It must also account for uncertainties such as variations in ore grades, equipment availability, and traffic conditions. To solve OPMOP, different approaches have been proposed in the past literature, including mathematical programming~\cite{Quigley27052020, Saliba13092020}, heuristic and metaheuristic methods~\cite{10.1007/978-3-319-15892-1_29,SOUZA20101041}, and simulation-based approaches~\cite{GILANI2020101738}. 

Traditional approaches to OPMOP often assume deterministic parameters. However, mining operations are subject to uncertainties in ore grades, material composition, equipment performance, and travel times. Ignoring these uncertainties can result in plans that fail to meet production or quality requirements in practice~\cite{Saliba13092020}.

Chance constraints~\cite{Charnes} provide a framework to account for uncertainty by ensuring that stochastic constraints are satisfied with a specified probability, denoted by $\alpha$. Chance-constrained
approaches have been applied to mining ~\cite{Mohtasham03042021,10.1145/3449639.3459382} and logistics~\cite{FARINA201653,10.5555/2888116.2888219}. However, large-scale chance-constrained optimization problems are difficult to solve using exact methods due to their combinatorial and nonlinear characteristics~\cite{10254112}. Evolutionary algorithms (EAs) provide an alternative and have been applied to a range of mining optimization problems~\cite{ELSAYED202077,GILANI2020101738, KHAN2018428,10612008,9504901,DynamicOPMSP}. Among them, multi-objective evolutionary algorithms (MOEAs) are effective when multiple conflicting objectives need to be considered simultaneously~\cite{ijcai2022p665,10.1145/3377930.3390162}.

In this paper, we propose a bi-objective evolutionary formulation for the chance-constrained OPMOP that simultaneously maximizes ore production and minimizes fleet cost while satisfying operational and chance-constrained ore-quality requirements. Ore-grade uncertainty is incorporated through chance constraints with confidence level $\alpha$, which controls the risk of violating the specified quality bounds. This formulation allows production, fleet cost, operational requirements, and ore-grade uncertainty to be considered within the same optimization framework.

\subsection{Related Work}

The OPMOP problem has been studied for equipment allocation, scheduling, and routing decisions in open-pit mining. Souza et al.~\cite{SOUZA20101041} proposed a hybrid heuristic combining greedy randomized adaptive search procedures and general variable neighborhood search for large-scale instances. This work was later extended to a
multi-objective formulation in~\cite{coelho2012}, allowing production and cost objectives to be considered simultaneously.

Multi-objective evolutionary algorithms have also been studied for OPMOP. Alexandre et al.~\cite{10.1145/3067695.3076004,10.1007/978-3-319-15892-1_29} compared multi-objective evolutionary algorithms for operational planning in open-pit mining. This work was further extended to truck dispatch problems, where evolutionary approaches were used to obtain trade-offs between operational objectives~\cite{alexandre2019multi}.

Several studies have considered uncertainty in open-pit mining optimization. Stochastic optimization models have been developed to account for variability in ore grades, supply, and operational conditions~\cite{Saliba13092020}. These approaches account for uncertainty explicitly but often consider a single objective. Chance constraints~\cite{Charnes} have also been used in mining optimization. Mohtasham et al.~\cite{Mohtasham03042021} proposed a chance-constrained goal programming model for truck-shovel allocation under uncertainty in loader production.

Evolutionary algorithms have also been applied to chance-constrained mining problems. A chance-constrained stockpile blending problem was studied using a differential evolution based algorithm in~\cite{10.1145/3449639.3459382}. A single-objective evolutionary approach for a chance-constrained open-pit mine scheduling problem was considered in~\cite{10254112}, where uncertainty was represented using multiple geological realizations. In~\cite{DynamicOPMSP}, a dynamic chance-constrained open-pit mine scheduling problem was studied with stochastic block economic values and time-varying mining and processing capacities, and four MOEAs were evaluated.

However, limited work has considered chance-constrained short-term operational planning with uncertain ore-quality attributes. In particular, the combination of multi-objective evolutionary optimization and a deterministic reformulation of two-sided chance-constrained quality requirements has not been studied for OPMOP.

\subsection{Contributions}
\label{sec:contributions}

To the best of our knowledge, this work is among the first to study OPMOP under uncertain ore-quality attributes using a bi-objective evolutionary formulation with chance-constrained quality requirements. We formulate the problem to maximize ore production and minimize fleet cost, model ore grades as independent normally distributed random variables, and derive sufficient deterministic conditions for the two-sided chance constraints, enabling evaluation without sampling or simulation. We develop a problem-specific evolutionary framework and evaluate GSEMO, NSGA-II, NSGA-III, and MOEA/D under different confidence levels, uncertainty magnitudes, and relationships between expected ore grades and their uncertainty.

The remainder of the paper is organized as follows. Section~\ref{sec:model} presents the problem formulation and the deterministic reformulation of the chance constraints. Section~\ref{sec:methodology} describes the solution representation, operators, and algorithms. Section~\ref{sec:experiments} presents the experimental setup and results. Finally, Section~\ref{sec:conclusion} concludes the paper and discusses future work.

\section{Open-Pit Mining Operational Planning Problem under Uncertainty}
\label{sec:model}

We consider a short-term open-pit operational planning problem over a fixed planning horizon of length $H$. The objective is to determine the number of haulage trips between mining fronts and destinations using a heterogeneous truck fleet while satisfying production, equipment, and quality constraints.

Let $O$ and $W$ denote the sets of ore and waste fronts, respectively, and let $F=O\cup W$ be the set of all mining fronts. Let $C$, $D$, $S$, $K$, and $Q$ denote the sets of crushers, waste dumps, shovels, truck types, and quality attributes, respectively.

For each ore front $o\in O$ and waste front $w\in W$, let $M_o$ and $M_w$ denote the available material. Let $L_f\in\mathbb{Z}_{>0}$ be the maximum number of shovels assigned to front $f\in F$, and let $a_{fs}\in\{0,1\}$ indicate whether shovel $s\in S$ is compatible with front $f$. Each shovel $s$ has loading rate $\overline{S}_s>0$, while truck type $k\in K$ has capacity $\mathrm{Cap}_k>0$, availability $N_k\in\mathbb{Z}_{>0}$, and cost coefficient $\lambda_k>0$.

We assume that the grade of quality attribute $q\in Q$ at ore front $o\in O$ follows $\widetilde{q}_{oq}\sim\mathcal{N}(\mu_{oq},\sigma_{oq}^{2})$, where $\mu_{oq}$ and $\sigma_{oq}^{2}$ are its mean and variance. For each quality attribute, the ore-grade random variables are assumed independent across ore fronts. For crusher $c\in C$, $\underline{q}_{cq}$ and $\overline{q}_{cq}$ denote the lower and upper quality bounds.

Let $\widehat{P}_c$ and $\widehat{R}_d$ denote the minimum material delivery requirements at crusher $c$ and waste dump $d$, respectively, and let $\tau_{ock}$ and $\tau_{wdk}$ denote the corresponding ore and waste cycle times. The decision variables are $n_{ock}\in\mathbb{Z}_{\geq0}$ and $n_{wdk}\in\mathbb{Z}_{\geq0}$, denoting the numbers of ore and waste haulage trips, respectively, and $y_{fs}\in\{0,1\}$, indicating whether shovel $s$ is assigned to front $f$.

A solution is represented by $x=(n_{ock},n_{wdk},y_{fs})$. We define the hauled ore and waste tonnage as $h_{oc}=\sum_{k\in K}\mathrm{Cap}_k n_{ock}$ and $h_{wd}=\sum_{k\in K}\mathrm{Cap}_k n_{wdk}$, respectively.
The total material extracted from each ore and waste front is $p_o=\sum_{c\in C}h_{oc}$ and $r_w=\sum_{d\in D}h_{wd}$, while the total material delivered to each crusher and waste dump is $P_c=\sum_{o\in O}h_{oc}$ and $R_d=\sum_{w\in W}h_{wd}$.

For crusher $c$ and quality attribute $q$, define $B_{cq}=\sum_{o\in O}\widetilde{q}_{oq}h_{oc}$, so that
$B_{cq}/P_c$ is the corresponding blended ore grade. The total operating time required from truck type $k$ is
$T_k=\sum_{o\in O}\sum_{c\in C}\tau_{ock}n_{ock} +\sum_{w\in W}\sum_{d\in D}\tau_{wdk}n_{wdk}$.
The number of active trucks is $u_k=\min\{\lceil T_k/H\rceil,N_k\}$. The total ore production and fleet cost are $P(x)=\sum_{o\in O}p_o$ and $C_{\mathrm{fleet}}(x)=\sum_{k\in K}\lambda_k u_k$.

The bi-objective problem is
\begin{align}
\max \quad & P(x) \label{eq:obj_prod}\\
\min \quad & C_{\mathrm{fleet}}(x) \label{eq:obj_fleet}\\[0.5em]
\text{s.t.}\quad
& p_o\leq M_o
&& \forall o\in O, \label{eq:reserve_ore}\\
& r_w\leq M_w
&& \forall w\in W, \label{eq:reserve_waste}\\
& \sum_{s\in S}y_{fs}\leq L_f
&& \forall f\in F, \label{eq:shovel_limit}\\
& \sum_{f\in F}y_{fs}\leq 1
&& \forall s\in S, \label{eq:one_front}\\
& y_{fs}\leq a_{fs}
&& \forall f\in F,\;s\in S, \label{eq:compatibility}\\
& p_o\leq H\sum_{s\in S}\overline{S}_s y_{os}
&& \forall o\in O, \label{eq:prod_ore}\\
& r_w\leq H\sum_{s\in S}\overline{S}_s y_{ws}
&& \forall w\in W, \label{eq:prod_waste}\\
& P_c\geq\widehat{P}_c
&& \forall c\in C, \label{eq:crusher_req}\\
& R_d\geq\widehat{R}_d
&& \forall d\in D, \label{eq:dump_req}\\
& \Pr\!\left(
\underline{q}_{cq}P_c
\leq B_{cq}
\leq
\overline{q}_{cq}P_c
\right)\geq\alpha
&& \forall c\in C,\;q\in Q,
\label{eq:chance_constraint}\\
& T_k\leq HN_k
&& \forall k\in K.
\label{eq:truck_time}
\end{align}

The two objectives represent the trade-off between ore production and fleet cost. Constraints~\eqref{eq:reserve_ore}-\eqref{eq:truck_time} enforce reserve, shovel-assignment, loading-capacity, minimum-delivery, ore-quality, and truck-availability requirements. In particular,~\eqref{eq:chance_constraint} requires the blended ore quality to remain within its specified bounds with probability at least $\alpha$.

\subsection{Deterministic Reformulation}

Since $B_{cq}$ is a linear combination of independent normally distributed ore-grade variables, it is also normally distributed. Its mean and standard deviation are
\[
m_{cq}=\mathbb{E}[B_{cq}]
=\sum_{o\in O}\mu_{oq}h_{oc},\]
\[
s_{cq}=\sqrt{\mathrm{Var}[B_{cq}]}
=\sqrt{\sum_{o\in O}\sigma_{oq}^{2}h_{oc}^{2}}.
\]

The following theorem provides sufficient deterministic conditions for the two-sided chance constraint.

\begin{theorem}
\label{thm:deterministic_reformulation}
Assume that the ore-grade random variables $\widetilde{q}_{oq}$ are independent and normally distributed across ore fronts. Sufficient deterministic conditions for satisfying Eq.~\eqref{eq:chance_constraint} are
\begin{align}
m_{cq}-z_\beta s_{cq}
&\geq \underline{q}_{cq}P_c,
\label{eq:soc_lower}\\
m_{cq}+z_\beta s_{cq}
&\leq \overline{q}_{cq}P_c,
\label{eq:soc_upper}
\end{align}
where $\beta=(1+\alpha)/2$ and $z_\beta=\Phi^{-1}(\beta)$.
\end{theorem}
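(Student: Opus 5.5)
We argue through the complement event and a union bound, using the normality of $B_{cq}$ noted just before the statement. Fix $c\in C$ and $q\in Q$. If $P_c=0$ then all $h_{oc}=0$, so $B_{cq}=0$ almost surely and the event in Eq.~\eqref{eq:chance_constraint} holds with probability one. We may therefore assume $P_c>0$. We also treat separately the degenerate case $s_{cq}=0$. There $B_{cq}=m_{cq}$ almost surely, and conditions \eqref{eq:soc_lower}--\eqref{eq:soc_upper} reduce to $\underline{q}_{cq}P_c\le m_{cq}\le \overline{q}_{cq}P_c$, which gives the claim immediately. So assume $s_{cq}>0$, in which case $B_{cq}\sim\mathcal{N}(m_{cq},s_{cq}^2)$ by independence of the $\widetilde{q}_{oq}$ across ore fronts.

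We bound the violation probability. By the union bound,
\[
\Pr\!\left(\underline{q}_{cq}P_c\le B_{cq}\le \overline{q}_{cq}P_c\right)\ \ge\ 1-\Pr\!\left(B_{cq}<\underline{q}_{cq}P_c\right)-\Pr\!\left(B_{cq}>\overline{q}_{cq}P_c\right).
\]
It therefore suffices to show that each one-sided tail probability is at most $(1-\alpha)/2=1-\beta$.

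Standardize with $Z=(B_{cq}-m_{cq})/s_{cq}\sim\mathcal{N}(0,1)$. Condition \eqref{eq:soc_lower} gives $(\underline{q}_{cq}P_c-m_{cq})/s_{cq}\le -z_\beta$. Hence
\[
\Pr(B_{cq}<\underline{q}_{cq}P_c)\le \Pr(Z< -z_\beta)=\Phi(-z_\beta)=1-\beta,
\]
using the monotonicity of $\Phi$ and the symmetry $\Phi(-z)=1-\Phi(z)$. Symmetrically, \eqref{eq:soc_upper} gives $(\overline{q}_{cq}P_c-m_{cq})/s_{cq}\ge z_\beta$, so
\[
\Pr(B_{cq}>\overline{q}_{cq}P_c)\le 1-\Phi(z_\beta)=1-\beta.
\]
Summing the two bounds, the violation probability is at most $2(1-\beta)=1-\alpha$, which yields Eq.~\eqref{eq:chance_constraint}.

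There is no real obstacle here. The only points needing care are the bookkeeping of the degenerate cases ($P_c=0$ or $s_{cq}=0$, where standardization is undefined) and a remark on why the conditions are only sufficient. The union bound loses the slack between $\Pr(\text{two-sided})$ and the sum of the two tails. Moreover, using $z_\beta$ rather than the exact two-sided quantile of an interval that need not be centered at $m_{cq}$ makes the conditions conservative, so they are not necessary in general.
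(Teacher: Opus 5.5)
Your proof is correct and is essentially the paper's argument: your union bound on the two tail events is exactly the Bonferroni inequality $\Pr(L\cap U)\ge\Pr(L)+\Pr(U)-1$ that the paper uses, followed by the same per-tail normal-quantile step with $1-\beta=(1-\alpha)/2$; your separate treatment of $P_c=0$ and $s_{cq}=0$ adds care the paper skips. One correction to your closing remark: since $\underline{q}_{cq}P_c\le\overline{q}_{cq}P_c$, the events $\{B_{cq}<\underline{q}_{cq}P_c\}$ and $\{B_{cq}>\overline{q}_{cq}P_c\}$ are disjoint, so the union bound is an equality here and the conditions are conservative only because they force an equal $(1-\alpha)/2$ split of the risk budget between the two tails.
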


\begin{proof}
Let
\[
L=\{B_{cq}\geq\underline{q}_{cq}P_c\},
\qquad
U=\{B_{cq}\leq\overline{q}_{cq}P_c\},
\]
denote the events that the lower and upper quality bounds are satisfied, respectively. The two-sided chance constraint requires $\Pr(L\cap U)\geq\alpha$.

By Bonferroni's inequality~\cite{KucukyavuzJiang2022},
\[
\Pr(L\cap U)\geq\Pr(L)+\Pr(U)-1.
\]
If both $\Pr(L)$ and $\Pr(U)$ are required to be at least $\beta$, then Bonferroni's inequality gives
\[
\Pr(L\cap U)\geq \beta+\beta-1=2\beta-1.
\]
To guarantee the original requirement $\Pr(L\cap U)\geq\alpha$, it is therefore sufficient to choose $\beta$ such that 
\[
2\beta-1=\alpha,
\]
which gives $\beta=\frac{1+\alpha}{2}$.

Since $B_{cq}\sim\mathcal{N}(m_{cq},s_{cq}^{2})$, the requirement $\Pr(L)\geq\beta$ is equivalent to $m_{cq}-z_\beta s_{cq}\geq\underline{q}_{cq}P_c$, which gives Eq.~\eqref{eq:soc_lower}. Similarly, $\Pr(U)\geq\beta$ is equivalent to $m_{cq}+z_\beta s_{cq}\leq\overline{q}_{cq}P_c$, which gives Eq.~\eqref{eq:soc_upper}. Therefore, Eqs.~\eqref{eq:soc_lower}-\eqref{eq:soc_upper} imply $\Pr(L\cap U)\geq\alpha$ and are therefore sufficient deterministic conditions for the original two-sided chance constraint.
\end{proof}

\subsection{Bi-Objective Fitness Function}

The fitness of a solution $x$ is defined as
\begin{align}
f_1(x)
&=
\begin{cases}
P(x), & V(x)=0,\\
P(x)-\rho V(x), & \text{otherwise},
\end{cases}
\label{eq:objective_1_opm}\\
f_2(x)
&=
\begin{cases}
C_{\mathrm{fleet}}(x), & V(x)=0,\\
C_{\mathrm{fleet}}(x)+\rho V(x), & \text{otherwise},
\end{cases}
\label{eq:objective_2_opm}
\end{align}

The total constraint violation is defined as $V(x)=V_{\mathrm{op}}(x)+V_{\mathrm{cc}}(x)$, where $V_{\mathrm{op}}(x)$ and $V_{\mathrm{cc}}(x)$ represent violations of the operational constraints and deterministic chance-constraint conditions, respectively. The violation of each inequality is measured by the amount by which it exceeds its feasible bound, and these violations are summed. The penalty factor is scaled according to the minimum throughput requirements of the instance as $\rho = \max\left(\sum_{c\in C}\widehat{P}_c + \sum_{d\in D}\widehat{R}_d, 1.0 \right)$.

\section{Multi-Objective Evolutionary Algorithms for OPMOP}
\label{sec:methodology}

We evaluate the proposed chance-constrained bi-objective OPMOP using
GSEMO~\cite{gsemo}, NSGA-II~\cite{Deb2002AFA}, NSGA-III~\cite{6600851}, and MOEA/D~\cite{4358754}, representing different dominance- and decomposition-based search mechanisms. These algorithms have demonstrated effective performance on combinatorial and constrained multi-objective optimization problems~\cite{10.1145/3638529.3654066,10.1162/evco_a_00360}. GSEMO maintains a variable-size set of non-dominated solutions, NSGA-II uses non-dominated sorting and crowding distance, NSGA-III uses reference-direction-based selection, and MOEA/D decomposes the problem into scalar subproblems. For MOEA/D, we use the Tchebycheff aggregation method. All algorithms use the same problem-specific representation, initialization, mutation, and repair mechanisms, while NSGA-II, NSGA-III, and MOEA/D additionally use crossover.

\subsection{Initial Solution Generation}

We generate the initial solutions using a constructive heuristic. We first assign waste movements and then generate ore-production decisions. For each selected front, we assign an eligible shovel and generate trip variables while considering the operational constraints, including loading capacity, reserve limits, truck-time availability, and variable bounds.

For ore fronts, we use a quality score based on the distance between the expected ore grades and the crusher quality bounds relative to grade variability. This guides the initialization toward ore fronts with favorable quality characteristics. We do not explicitly enforce the chance constraints during initialization, and any violations are handled through the penalty mechanism during fitness evaluation.

\subsection{Variation Operators}

We use problem-specific crossover and mutation operators that maintain the block structure of the haulage decisions.

For crossover, we use a uniform block crossover with probability $p_c$. For each ore front-crusher pair, we exchange the complete block of trip variables for all truck types between two offspring with probability $0.5$. We apply the same procedure to each waste front-dump pair and exchange the shovel-assignment block for each processed front independently with probability $0.5$. We then apply mutation and repair to the resulting offspring.

For mutation, we use four mechanisms. First, we select each trip-count variable independently with probability $p_m$ and perturb it by a value sampled uniformly from $\{-5,-4,-3,-2,-1,1,2,3,4,5\}$, while restricting the resulting value to its bounds. Second, we apply bit-flip mutation to each shovel-assignment variable with probability $p_s$.

Third, with probability $p_r$, we relocate an active ore or waste trip to a randomly selected front-destination pair of the same material type without changing the truck type. Finally, with probability $p_t$, we replace the truck type of an active front-destination assignment and adjust the number of trips to maintain a similar transported tonnage.

\begin{table}[!t]
\centering
\caption{Characteristics of the instances~\cite{SOUZA20101041}.}
\label{tab:opm_instances}
\resizebox{\textwidth}{!}{
\begin{tabular}{l c c l c c l}
\toprule
Instance & \# Fronts & \# Shovels & Shovel Characteristics & \# Attributes & \# Trucks & Truck Characteristics \\
\midrule
opm1 & 17 & 8 & (4, 900\,\text{t}), (2, 1000\,\text{t}), (2, 1100\,\text{t}) & 10 & 30 & (15, 50\,\text{t}), (15, 80\,\text{t}) \\
opm2 & 17 & 8 & (4, 900\,\text{t}), (2, 1000\,\text{t}), (2, 1100\,\text{t}) & 10 & 30 & (15, 50\,\text{t}), (15, 80\,\text{t}) \\
opm5 & 17 & 8 & (4, 900\,\text{t}), (2, 1000\,\text{t}), (2, 1100\,\text{t}) & 5 & 30 & (15, 50\,\text{t}), (15, 80\,\text{t}) \\
opm6 & 17 & 8 & (4, 900\,\text{t}), (2, 1000\,\text{t}), (2, 1100\,\text{t}) & 5 & 30 & (15, 50\,\text{t}), (15, 80\,\text{t}) \\
\bottomrule
\end{tabular}
}
\end{table}

\begin{figure}[!t]
    \centering
    \includegraphics[width=0.95\textwidth]{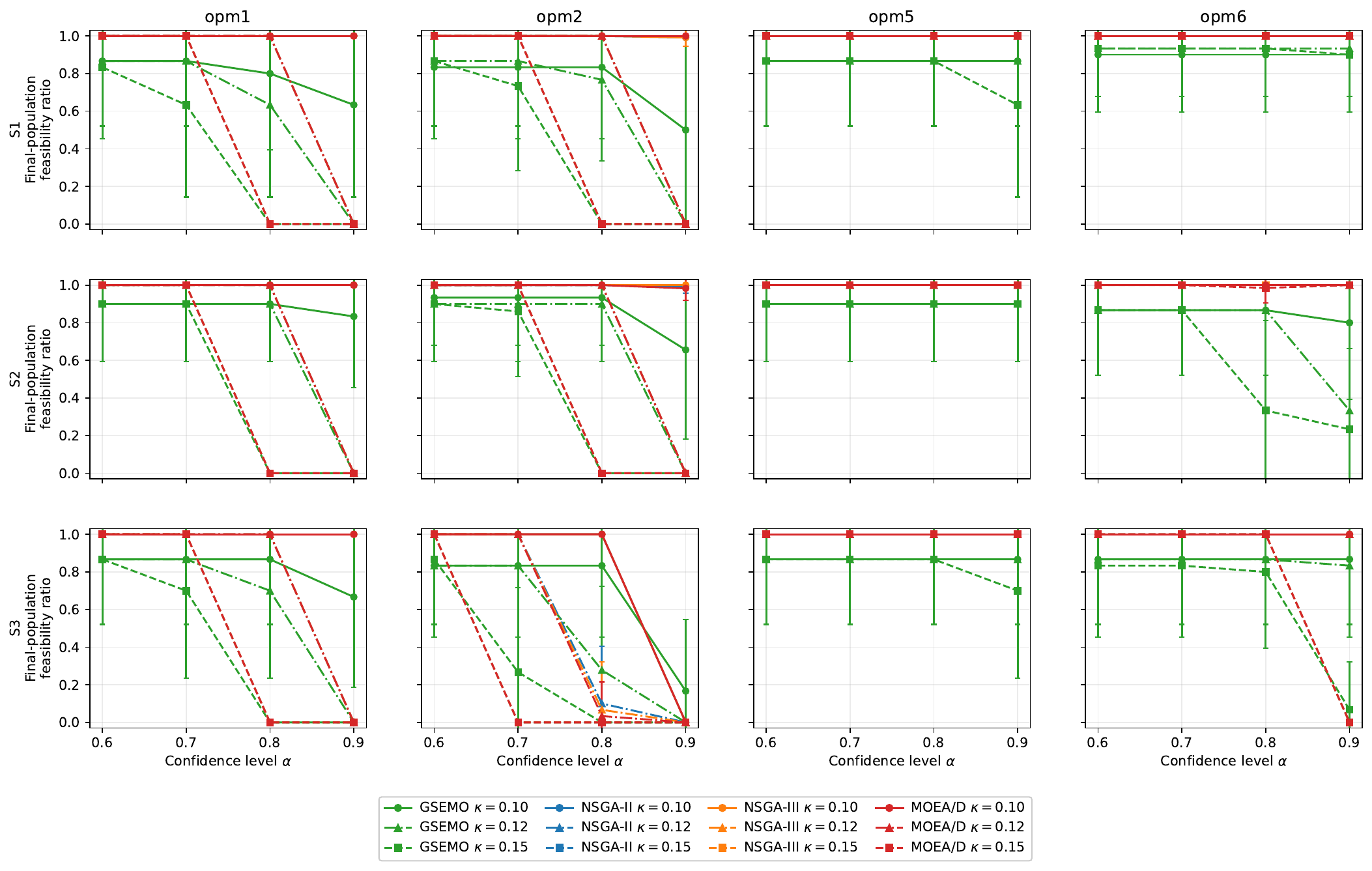}
    \caption{Feasibility ratio across confidence levels $\alpha$ for all algorithms under uncertainty settings S1, S2, and S3 (top to bottom), for different values of $\kappa$.}
    \label{fig:feasibility_ratio}
\end{figure}

\begin{figure}[!t]
    \centering
    \includegraphics[width=0.95\textwidth]{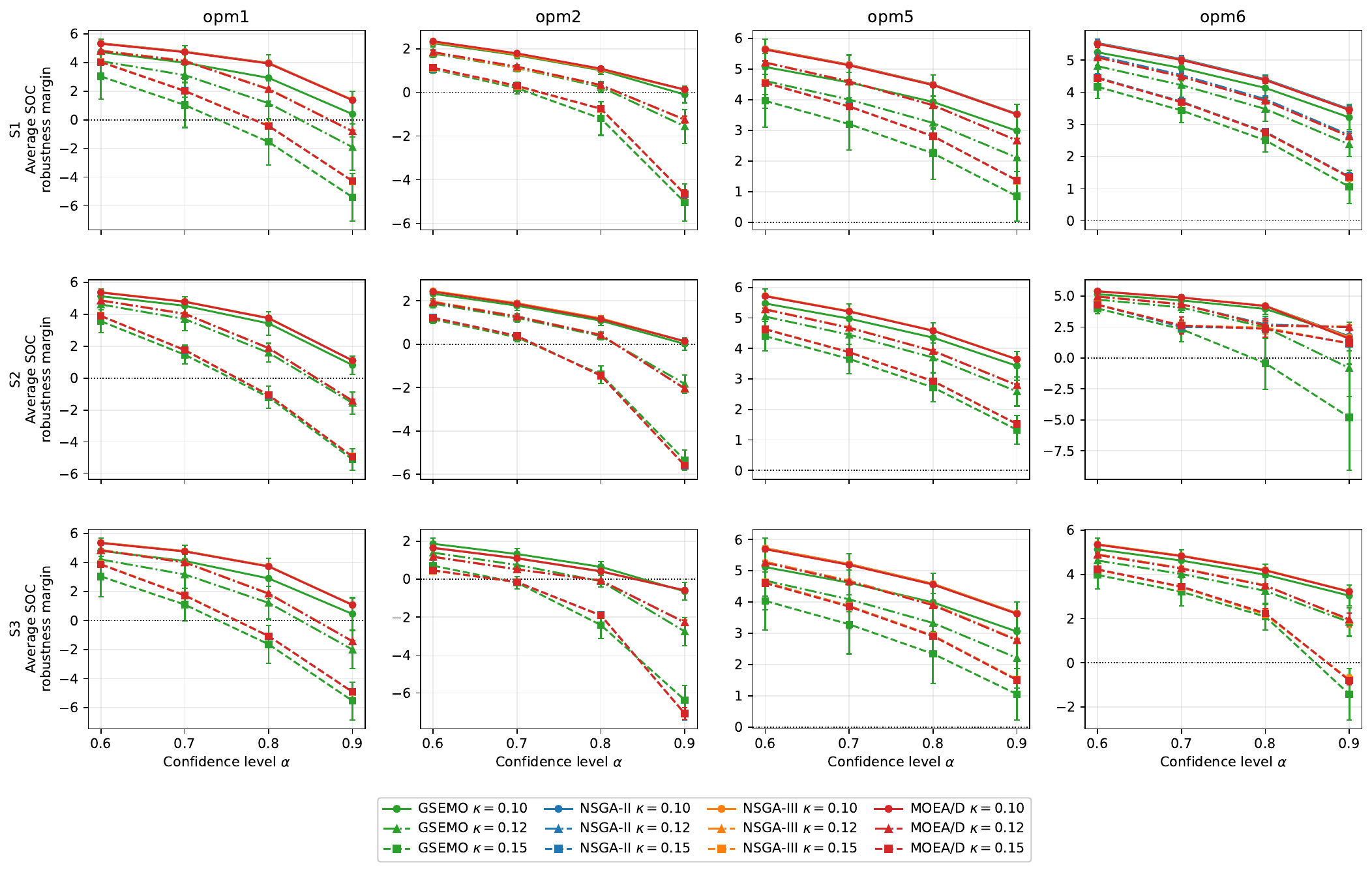}
    \caption{Average chance-constraint margin across confidence levels $\alpha$ under uncertainty settings S1, S2, and S3 (top to bottom) for different values of $\kappa$.}
    \label{fig:robustness_margin}
\end{figure}

\subsection{Repair Mechanism}

After variation, we apply a repair operator to restore feasibility with respect to the structural and operational constraints. The repair procedure first adjusts shovel assignments to satisfy compatibility, front-specific assignment limits, and assignment uniqueness. Trips associated with inactive fronts are removed. The trip variables are then adjusted, when necessary, to satisfy reserve limits, shovel-loading capacities, and truck-time availability. The chance constraints are not explicitly repaired. Their violations are included in the total constraint violation $V(x)$ and handled by the penalty mechanism during fitness evaluation. 

\begin{table}[!t]
\centering
\caption{Mean, standard deviation, statistical comparisons, and feasible-run count for the best feasible ore production under S1 across all $\kappa$ values.}
\label{tab:f1_setting_S1}
\resizebox{\textwidth}{!}{
\begin{tabular}{l c c rrrr rrrr rrrr rrrr}
\toprule
Instance & $\kappa$ & $\alpha$ & \multicolumn{4}{c}{GSEMO\textsuperscript{(1)}} & \multicolumn{4}{c}{NSGA-II\textsuperscript{(2)}} & \multicolumn{4}{c}{NSGA-III\textsuperscript{(3)}} & \multicolumn{4}{c}{MOEA/D\textsuperscript{(4)}} \\
\cmidrule(lr){4-7} \cmidrule(lr){8-11} \cmidrule(lr){12-15} \cmidrule(lr){16-19}
 & & & Mean & Std & Feas. & Stat & Mean & Std & Feas. & Stat & Mean & Std & Feas. & Stat & Mean & Std & Feas. & Stat \\
\midrule
opm1 & 0.10 & 0.60 & 5876.9 & 81.5 & 26/30 & 2\textsuperscript{(-)}3\textsuperscript{(-)}4\textsuperscript{(*)} & \textbf{5976.3} & 10.0 & 30/30 & 1\textsuperscript{(+)}3\textsuperscript{(*)}4\textsuperscript{(+)} & 5973.7 & 14.5 & 30/30 & 1\textsuperscript{(+)}2\textsuperscript{(*)}4\textsuperscript{(+)} & 5941.0 & 30.2 & 30/30 & 1\textsuperscript{(*)}2\textsuperscript{(-)}3\textsuperscript{(-)} \\
 &  & 0.70 & 5876.9 & 81.5 & 26/30 & 2\textsuperscript{(-)}3\textsuperscript{(-)}4\textsuperscript{(*)} & \textbf{5976.3} & 10.0 & 30/30 & 1\textsuperscript{(+)}3\textsuperscript{(*)}4\textsuperscript{(+)} & 5973.7 & 14.5 & 30/30 & 1\textsuperscript{(+)}2\textsuperscript{(*)}4\textsuperscript{(+)} & 5941.0 & 30.2 & 30/30 & 1\textsuperscript{(*)}2\textsuperscript{(-)}3\textsuperscript{(-)} \\
 &  & 0.80 & 5883.3 & 81.6 & 24/30 & 2\textsuperscript{(-)}3\textsuperscript{(-)}4\textsuperscript{(*)} & \textbf{5976.3} & 10.0 & 30/30 & 1\textsuperscript{(+)}3\textsuperscript{(*)}4\textsuperscript{(+)} & 5973.7 & 14.5 & 30/30 & 1\textsuperscript{(+)}2\textsuperscript{(*)}4\textsuperscript{(+)} & 5941.0 & 30.2 & 30/30 & 1\textsuperscript{(*)}2\textsuperscript{(-)}3\textsuperscript{(-)} \\
 &  & 0.90 & 5884.2 & 83.4 & 19/30 & 2\textsuperscript{(-)}3\textsuperscript{(-)}4\textsuperscript{(*)} & \textbf{5977.0} & 7.9 & 30/30 & 1\textsuperscript{(+)}3\textsuperscript{(*)}4\textsuperscript{(+)} & 5973.0 & 14.7 & 30/30 & 1\textsuperscript{(+)}2\textsuperscript{(*)}4\textsuperscript{(+)} & 5941.0 & 30.2 & 30/30 & 1\textsuperscript{(*)}2\textsuperscript{(-)}3\textsuperscript{(-)} \\
\cmidrule(lr){2-19}
 & 0.12 & 0.60 & 5876.9 & 81.5 & 26/30 & 2\textsuperscript{(-)}3\textsuperscript{(-)}4\textsuperscript{(*)} & \textbf{5976.3} & 10.0 & 30/30 & 1\textsuperscript{(+)}3\textsuperscript{(*)}4\textsuperscript{(+)} & 5973.7 & 14.5 & 30/30 & 1\textsuperscript{(+)}2\textsuperscript{(*)}4\textsuperscript{(+)} & 5941.0 & 30.2 & 30/30 & 1\textsuperscript{(*)}2\textsuperscript{(-)}3\textsuperscript{(-)} \\
 &  & 0.70 & 5876.9 & 81.5 & 26/30 & 2\textsuperscript{(-)}3\textsuperscript{(-)}4\textsuperscript{(*)} & \textbf{5976.3} & 10.0 & 30/30 & 1\textsuperscript{(+)}3\textsuperscript{(*)}4\textsuperscript{(+)} & 5973.7 & 14.5 & 30/30 & 1\textsuperscript{(+)}2\textsuperscript{(*)}4\textsuperscript{(+)} & 5941.0 & 30.2 & 30/30 & 1\textsuperscript{(*)}2\textsuperscript{(-)}3\textsuperscript{(-)} \\
 &  & 0.80 & 5884.2 & 83.4 & 19/30 & 2\textsuperscript{(-)}3\textsuperscript{(-)}4\textsuperscript{(*)} & \textbf{5977.3} & 8.7 & 30/30 & 1\textsuperscript{(+)}3\textsuperscript{(*)}4\textsuperscript{(+)} & 5976.3 & 10.0 & 30/30 & 1\textsuperscript{(+)}2\textsuperscript{(*)}4\textsuperscript{(+)} & 5941.0 & 30.2 & 30/30 & 1\textsuperscript{(*)}2\textsuperscript{(-)}3\textsuperscript{(-)} \\
 &  & 0.90 & - & - & 0/30 &  & - & - & 0/30 &  & - & - & 0/30 &  & - & - & 0/30 &  \\
\cmidrule(lr){2-19}
 & 0.15 & 0.60 & 5880.0 & 81.6 & 25/30 & 2\textsuperscript{(-)}3\textsuperscript{(-)}4\textsuperscript{(*)} & \textbf{5976.3} & 10.0 & 30/30 & 1\textsuperscript{(+)}3\textsuperscript{(*)}4\textsuperscript{(+)} & 5973.7 & 14.5 & 30/30 & 1\textsuperscript{(+)}2\textsuperscript{(*)}4\textsuperscript{(+)} & 5941.0 & 30.2 & 30/30 & 1\textsuperscript{(*)}2\textsuperscript{(-)}3\textsuperscript{(-)} \\
 &  & 0.70 & 5884.2 & 83.4 & 19/30 & 2\textsuperscript{(-)}3\textsuperscript{(-)}4\textsuperscript{(*)} & \textbf{5977.3} & 8.7 & 30/30 & 1\textsuperscript{(+)}3\textsuperscript{(*)}4\textsuperscript{(+)} & 5976.3 & 10.0 & 30/30 & 1\textsuperscript{(+)}2\textsuperscript{(*)}4\textsuperscript{(+)} & 5941.0 & 30.2 & 30/30 & 1\textsuperscript{(*)}2\textsuperscript{(-)}3\textsuperscript{(-)} \\
 &  & 0.80 & - & - & 0/30 &  & - & - & 0/30 &  & - & - & 0/30 &  & - & - & 0/30 &  \\
 &  & 0.90 & - & - & 0/30 &  & - & - & 0/30 &  & - & - & 0/30 &  & - & - & 0/30 &  \\
\midrule
opm2 & 0.10 & 0.60 & 5878.8 & 79.9 & 25/30 & 2\textsuperscript{(-)}3\textsuperscript{(-)}4\textsuperscript{(*)} & \textbf{5978.7} & 5.1 & 30/30 & 1\textsuperscript{(+)}3\textsuperscript{(*)}4\textsuperscript{(+)} & 5976.7 & 10.6 & 30/30 & 1\textsuperscript{(+)}2\textsuperscript{(*)}4\textsuperscript{(+)} & 5949.3 & 30.0 & 30/30 & 1\textsuperscript{(*)}2\textsuperscript{(-)}3\textsuperscript{(-)} \\
 &  & 0.70 & 5878.8 & 79.9 & 25/30 & 2\textsuperscript{(-)}3\textsuperscript{(-)}4\textsuperscript{(*)} & \textbf{5978.7} & 5.1 & 30/30 & 1\textsuperscript{(+)}3\textsuperscript{(*)}4\textsuperscript{(+)} & 5977.3 & 10.1 & 30/30 & 1\textsuperscript{(+)}2\textsuperscript{(*)}4\textsuperscript{(+)} & 5949.3 & 30.0 & 30/30 & 1\textsuperscript{(*)}2\textsuperscript{(-)}3\textsuperscript{(-)} \\
 &  & 0.80 & 5879.2 & 80.5 & 25/30 & 2\textsuperscript{(-)}3\textsuperscript{(-)}4\textsuperscript{(*)} & 5976.7 & 10.6 & 30/30 & 1\textsuperscript{(+)}3\textsuperscript{(*)}4\textsuperscript{(+)} & \textbf{5977.3} & 8.7 & 30/30 & 1\textsuperscript{(+)}2\textsuperscript{(*)}4\textsuperscript{(+)} & 5949.3 & 30.0 & 30/30 & 1\textsuperscript{(*)}2\textsuperscript{(-)}3\textsuperscript{(-)} \\
 &  & 0.90 & 5898.7 & 73.9 & 15/30 & 2\textsuperscript{(-)}3\textsuperscript{(-)}4\textsuperscript{(*)} & \textbf{5974.0} & 15.0 & 30/30 & 1\textsuperscript{(+)}3\textsuperscript{(*)}4\textsuperscript{(+)} & \textbf{5974.0} & 15.0 & 30/30 & 1\textsuperscript{(+)}2\textsuperscript{(*)}4\textsuperscript{(+)} & 5934.7 & 30.3 & 30/30 & 1\textsuperscript{(*)}2\textsuperscript{(-)}3\textsuperscript{(-)} \\
\cmidrule(lr){2-19}
 & 0.12 & 0.60 & 5876.2 & 80.4 & 26/30 & 2\textsuperscript{(-)}3\textsuperscript{(-)}4\textsuperscript{(*)} & \textbf{5975.3} & 13.6 & 30/30 & 1\textsuperscript{(+)}3\textsuperscript{(*)}4\textsuperscript{(+)} & 5974.7 & 12.8 & 30/30 & 1\textsuperscript{(+)}2\textsuperscript{(*)}4\textsuperscript{(+)} & 5940.3 & 34.3 & 30/30 & 1\textsuperscript{(*)}2\textsuperscript{(-)}3\textsuperscript{(-)} \\
 &  & 0.70 & 5876.2 & 80.4 & 26/30 & 2\textsuperscript{(-)}3\textsuperscript{(-)}4\textsuperscript{(*)} & \textbf{5978.0} & 8.1 & 30/30 & 1\textsuperscript{(+)}3\textsuperscript{(*)}4\textsuperscript{(+)} & 5975.3 & 13.6 & 30/30 & 1\textsuperscript{(+)}2\textsuperscript{(*)}4\textsuperscript{(+)} & 5940.3 & 34.3 & 30/30 & 1\textsuperscript{(*)}2\textsuperscript{(-)}3\textsuperscript{(-)} \\
 &  & 0.80 & 5884.8 & 78.5 & 23/30 & 2\textsuperscript{(-)}3\textsuperscript{(-)}4\textsuperscript{(*)} & \textbf{5978.0} & 8.1 & 30/30 & 1\textsuperscript{(+)}3\textsuperscript{(*)}4\textsuperscript{(+)} & 5974.7 & 13.8 & 30/30 & 1\textsuperscript{(+)}2\textsuperscript{(*)}4\textsuperscript{(+)} & 5937.7 & 34.2 & 30/30 & 1\textsuperscript{(*)}2\textsuperscript{(-)}3\textsuperscript{(-)} \\
 &  & 0.90 & - & - & 0/30 &  & - & - & 0/30 &  & - & - & 0/30 &  & - & - & 0/30 &  \\
\cmidrule(lr){2-19}
 & 0.15 & 0.60 & 5875.8 & 79.8 & 26/30 & 2\textsuperscript{(-)}3\textsuperscript{(-)}4\textsuperscript{(*)} & \textbf{5978.0} & 8.1 & 30/30 & 1\textsuperscript{(+)}3\textsuperscript{(*)}4\textsuperscript{(+)} & 5975.3 & 13.6 & 30/30 & 1\textsuperscript{(+)}2\textsuperscript{(*)}4\textsuperscript{(+)} & 5945.0 & 32.5 & 30/30 & 1\textsuperscript{(*)}2\textsuperscript{(-)}3\textsuperscript{(-)} \\
 &  & 0.70 & 5889.5 & 79.3 & 22/30 & 2\textsuperscript{(-)}3\textsuperscript{(-)}4\textsuperscript{(*)} & \textbf{5977.3} & 8.7 & 30/30 & 1\textsuperscript{(+)}3\textsuperscript{(*)}4\textsuperscript{(+)} & 5976.0 & 13.3 & 30/30 & 1\textsuperscript{(+)}2\textsuperscript{(*)}4\textsuperscript{(+)} & 5945.0 & 32.5 & 30/30 & 1\textsuperscript{(*)}2\textsuperscript{(-)}3\textsuperscript{(-)} \\
 &  & 0.80 & - & - & 0/30 &  & - & - & 0/30 &  & - & - & 0/30 &  & - & - & 0/30 &  \\
 &  & 0.90 & - & - & 0/30 &  & - & - & 0/30 &  & - & - & 0/30 &  & - & - & 0/30 &  \\
\midrule
opm5 & 0.10 & 0.60 & 5879.6 & 78.4 & 26/30 & 2\textsuperscript{(-)}3\textsuperscript{(-)}4\textsuperscript{(*)} & \textbf{5978.3} & 6.5 & 30/30 & 1\textsuperscript{(+)}3\textsuperscript{(*)}4\textsuperscript{(+)} & 5978.0 & 6.1 & 30/30 & 1\textsuperscript{(+)}2\textsuperscript{(*)}4\textsuperscript{(+)} & 5936.3 & 29.3 & 30/30 & 1\textsuperscript{(*)}2\textsuperscript{(-)}3\textsuperscript{(-)} \\
 &  & 0.70 & 5879.6 & 78.4 & 26/30 & 2\textsuperscript{(-)}3\textsuperscript{(-)}4\textsuperscript{(*)} & \textbf{5978.3} & 6.5 & 30/30 & 1\textsuperscript{(+)}3\textsuperscript{(*)}4\textsuperscript{(+)} & 5978.0 & 6.1 & 30/30 & 1\textsuperscript{(+)}2\textsuperscript{(*)}4\textsuperscript{(+)} & 5936.3 & 29.3 & 30/30 & 1\textsuperscript{(*)}2\textsuperscript{(-)}3\textsuperscript{(-)} \\
 &  & 0.80 & 5879.6 & 78.4 & 26/30 & 2\textsuperscript{(-)}3\textsuperscript{(-)}4\textsuperscript{(*)} & \textbf{5978.3} & 6.5 & 30/30 & 1\textsuperscript{(+)}3\textsuperscript{(*)}4\textsuperscript{(+)} & 5978.0 & 6.1 & 30/30 & 1\textsuperscript{(+)}2\textsuperscript{(*)}4\textsuperscript{(+)} & 5936.3 & 29.3 & 30/30 & 1\textsuperscript{(*)}2\textsuperscript{(-)}3\textsuperscript{(-)} \\
 &  & 0.90 & 5879.6 & 78.4 & 26/30 & 2\textsuperscript{(-)}3\textsuperscript{(-)}4\textsuperscript{(*)} & \textbf{5978.3} & 6.5 & 30/30 & 1\textsuperscript{(+)}3\textsuperscript{(*)}4\textsuperscript{(+)} & 5978.0 & 6.1 & 30/30 & 1\textsuperscript{(+)}2\textsuperscript{(*)}4\textsuperscript{(+)} & 5936.3 & 29.3 & 30/30 & 1\textsuperscript{(*)}2\textsuperscript{(-)}3\textsuperscript{(-)} \\
\cmidrule(lr){2-19}
 & 0.12 & 0.60 & 5879.6 & 78.4 & 26/30 & 2\textsuperscript{(-)}3\textsuperscript{(-)}4\textsuperscript{(*)} & \textbf{5977.0} & 9.5 & 30/30 & 1\textsuperscript{(+)}3\textsuperscript{(*)}4\textsuperscript{(+)} & 5975.0 & 14.1 & 30/30 & 1\textsuperscript{(+)}2\textsuperscript{(*)}4\textsuperscript{(+)} & 5945.0 & 30.7 & 30/30 & 1\textsuperscript{(*)}2\textsuperscript{(-)}3\textsuperscript{(-)} \\
 &  & 0.70 & 5879.6 & 78.4 & 26/30 & 2\textsuperscript{(-)}3\textsuperscript{(-)}4\textsuperscript{(*)} & \textbf{5977.0} & 9.5 & 30/30 & 1\textsuperscript{(+)}3\textsuperscript{(*)}4\textsuperscript{(+)} & 5975.0 & 14.1 & 30/30 & 1\textsuperscript{(+)}2\textsuperscript{(*)}4\textsuperscript{(+)} & 5945.0 & 30.7 & 30/30 & 1\textsuperscript{(*)}2\textsuperscript{(-)}3\textsuperscript{(-)} \\
 &  & 0.80 & 5879.6 & 78.4 & 26/30 & 2\textsuperscript{(-)}3\textsuperscript{(-)}4\textsuperscript{(*)} & \textbf{5977.0} & 9.5 & 30/30 & 1\textsuperscript{(+)}3\textsuperscript{(*)}4\textsuperscript{(+)} & 5975.0 & 14.1 & 30/30 & 1\textsuperscript{(+)}2\textsuperscript{(*)}4\textsuperscript{(+)} & 5945.0 & 30.7 & 30/30 & 1\textsuperscript{(*)}2\textsuperscript{(-)}3\textsuperscript{(-)} \\
 &  & 0.90 & 5879.6 & 78.4 & 26/30 & 2\textsuperscript{(-)}3\textsuperscript{(-)}4\textsuperscript{(*)} & \textbf{5977.0} & 9.5 & 30/30 & 1\textsuperscript{(+)}3\textsuperscript{(*)}4\textsuperscript{(+)} & 5975.0 & 14.1 & 30/30 & 1\textsuperscript{(+)}2\textsuperscript{(*)}4\textsuperscript{(+)} & 5945.0 & 30.7 & 30/30 & 1\textsuperscript{(*)}2\textsuperscript{(-)}3\textsuperscript{(-)} \\
\cmidrule(lr){2-19}
 & 0.15 & 0.60 & 5875.8 & 79.8 & 26/30 & 2\textsuperscript{(-)}3\textsuperscript{(-)}4\textsuperscript{(*)} & \textbf{5977.0} & 9.5 & 30/30 & 1\textsuperscript{(+)}3\textsuperscript{(*)}4\textsuperscript{(+)} & 5976.3 & 10.0 & 30/30 & 1\textsuperscript{(+)}2\textsuperscript{(*)}4\textsuperscript{(+)} & 5945.0 & 31.6 & 30/30 & 1\textsuperscript{(*)}2\textsuperscript{(-)}3\textsuperscript{(-)} \\
 &  & 0.70 & 5875.8 & 79.8 & 26/30 & 2\textsuperscript{(-)}3\textsuperscript{(-)}4\textsuperscript{(*)} & \textbf{5977.0} & 9.5 & 30/30 & 1\textsuperscript{(+)}3\textsuperscript{(*)}4\textsuperscript{(+)} & 5976.3 & 10.0 & 30/30 & 1\textsuperscript{(+)}2\textsuperscript{(*)}4\textsuperscript{(+)} & 5945.0 & 31.6 & 30/30 & 1\textsuperscript{(*)}2\textsuperscript{(-)}3\textsuperscript{(-)} \\
 &  & 0.80 & 5875.8 & 79.8 & 26/30 & 2\textsuperscript{(-)}3\textsuperscript{(-)}4\textsuperscript{(*)} & \textbf{5977.0} & 9.5 & 30/30 & 1\textsuperscript{(+)}3\textsuperscript{(*)}4\textsuperscript{(+)} & 5976.3 & 10.0 & 30/30 & 1\textsuperscript{(+)}2\textsuperscript{(*)}4\textsuperscript{(+)} & 5945.0 & 31.6 & 30/30 & 1\textsuperscript{(*)}2\textsuperscript{(-)}3\textsuperscript{(-)} \\
 &  & 0.90 & 5882.6 & 81.2 & 19/30 & 2\textsuperscript{(-)}3\textsuperscript{(-)}4\textsuperscript{(*)} & \textbf{5977.3} & 8.7 & 30/30 & 1\textsuperscript{(+)}3\textsuperscript{(*)}4\textsuperscript{(+)} & 5976.3 & 8.5 & 30/30 & 1\textsuperscript{(+)}2\textsuperscript{(*)}4\textsuperscript{(+)} & 5945.0 & 31.6 & 30/30 & 1\textsuperscript{(*)}2\textsuperscript{(-)}3\textsuperscript{(-)} \\
\midrule
opm6 & 0.10 & 0.60 & 5884.4 & 75.9 & 27/30 & 2\textsuperscript{(-)}3\textsuperscript{(-)}4\textsuperscript{(*)} & \textbf{5977.0} & 9.5 & 30/30 & 1\textsuperscript{(+)}3\textsuperscript{(*)}4\textsuperscript{(+)} & 5976.3 & 10.0 & 30/30 & 1\textsuperscript{(+)}2\textsuperscript{(*)}4\textsuperscript{(+)} & 5948.0 & 28.0 & 30/30 & 1\textsuperscript{(*)}2\textsuperscript{(-)}3\textsuperscript{(-)} \\
 &  & 0.70 & 5884.4 & 75.9 & 27/30 & 2\textsuperscript{(-)}3\textsuperscript{(-)}4\textsuperscript{(*)} & \textbf{5977.0} & 9.5 & 30/30 & 1\textsuperscript{(+)}3\textsuperscript{(*)}4\textsuperscript{(+)} & 5976.3 & 10.0 & 30/30 & 1\textsuperscript{(+)}2\textsuperscript{(*)}4\textsuperscript{(+)} & 5948.0 & 28.0 & 30/30 & 1\textsuperscript{(*)}2\textsuperscript{(-)}3\textsuperscript{(-)} \\
 &  & 0.80 & 5884.4 & 75.9 & 27/30 & 2\textsuperscript{(-)}3\textsuperscript{(-)}4\textsuperscript{(*)} & \textbf{5977.0} & 9.5 & 30/30 & 1\textsuperscript{(+)}3\textsuperscript{(*)}4\textsuperscript{(+)} & 5976.3 & 10.0 & 30/30 & 1\textsuperscript{(+)}2\textsuperscript{(*)}4\textsuperscript{(+)} & 5948.0 & 28.0 & 30/30 & 1\textsuperscript{(*)}2\textsuperscript{(-)}3\textsuperscript{(-)} \\
 &  & 0.90 & 5884.4 & 75.9 & 27/30 & 2\textsuperscript{(-)}3\textsuperscript{(-)}4\textsuperscript{(*)} & \textbf{5977.0} & 9.5 & 30/30 & 1\textsuperscript{(+)}3\textsuperscript{(*)}4\textsuperscript{(+)} & 5976.3 & 10.0 & 30/30 & 1\textsuperscript{(+)}2\textsuperscript{(*)}4\textsuperscript{(+)} & 5948.0 & 28.0 & 30/30 & 1\textsuperscript{(*)}2\textsuperscript{(-)}3\textsuperscript{(-)} \\
\cmidrule(lr){2-19}
 & 0.12 & 0.60 & 5878.2 & 78.1 & 28/30 & 2\textsuperscript{(-)}3\textsuperscript{(-)}4\textsuperscript{(*)} & \textbf{5978.0} & 8.1 & 30/30 & 1\textsuperscript{(+)}3\textsuperscript{(*)}4\textsuperscript{(+)} & 5975.0 & 13.1 & 30/30 & 1\textsuperscript{(+)}2\textsuperscript{(*)}4\textsuperscript{(+)} & 5942.3 & 33.0 & 30/30 & 1\textsuperscript{(*)}2\textsuperscript{(-)}3\textsuperscript{(-)} \\
 &  & 0.70 & 5878.2 & 78.1 & 28/30 & 2\textsuperscript{(-)}3\textsuperscript{(-)}4\textsuperscript{(*)} & \textbf{5978.0} & 8.1 & 30/30 & 1\textsuperscript{(+)}3\textsuperscript{(*)}4\textsuperscript{(+)} & 5975.0 & 13.1 & 30/30 & 1\textsuperscript{(+)}2\textsuperscript{(*)}4\textsuperscript{(+)} & 5942.3 & 33.0 & 30/30 & 1\textsuperscript{(*)}2\textsuperscript{(-)}3\textsuperscript{(-)} \\
 &  & 0.80 & 5878.2 & 78.1 & 28/30 & 2\textsuperscript{(-)}3\textsuperscript{(-)}4\textsuperscript{(*)} & \textbf{5978.0} & 8.1 & 30/30 & 1\textsuperscript{(+)}3\textsuperscript{(*)}4\textsuperscript{(+)} & 5975.0 & 13.1 & 30/30 & 1\textsuperscript{(+)}2\textsuperscript{(*)}4\textsuperscript{(+)} & 5942.3 & 33.0 & 30/30 & 1\textsuperscript{(*)}2\textsuperscript{(-)}3\textsuperscript{(-)} \\
 &  & 0.90 & 5878.2 & 78.1 & 28/30 & 2\textsuperscript{(-)}3\textsuperscript{(-)}4\textsuperscript{(*)} & \textbf{5978.0} & 8.1 & 30/30 & 1\textsuperscript{(+)}3\textsuperscript{(*)}4\textsuperscript{(+)} & 5975.0 & 13.1 & 30/30 & 1\textsuperscript{(+)}2\textsuperscript{(*)}4\textsuperscript{(+)} & 5942.3 & 33.0 & 30/30 & 1\textsuperscript{(*)}2\textsuperscript{(-)}3\textsuperscript{(-)} \\
\cmidrule(lr){2-19}
 & 0.15 & 0.60 & 5878.2 & 78.1 & 28/30 & 2\textsuperscript{(-)}3\textsuperscript{(-)}4\textsuperscript{(*)} & \textbf{5977.3} & 11.4 & 30/30 & 1\textsuperscript{(+)}3\textsuperscript{(*)}4\textsuperscript{(+)} & 5976.7 & 13.0 & 30/30 & 1\textsuperscript{(+)}2\textsuperscript{(*)}4\textsuperscript{(+)} & 5947.0 & 28.4 & 30/30 & 1\textsuperscript{(*)}2\textsuperscript{(-)}3\textsuperscript{(-)} \\
 &  & 0.70 & 5878.2 & 78.1 & 28/30 & 2\textsuperscript{(-)}3\textsuperscript{(-)}4\textsuperscript{(*)} & \textbf{5977.3} & 11.4 & 30/30 & 1\textsuperscript{(+)}3\textsuperscript{(*)}4\textsuperscript{(+)} & 5976.7 & 13.0 & 30/30 & 1\textsuperscript{(+)}2\textsuperscript{(*)}4\textsuperscript{(+)} & 5947.0 & 28.4 & 30/30 & 1\textsuperscript{(*)}2\textsuperscript{(-)}3\textsuperscript{(-)} \\
 &  & 0.80 & 5878.2 & 78.1 & 28/30 & 2\textsuperscript{(-)}3\textsuperscript{(-)}4\textsuperscript{(*)} & \textbf{5977.3} & 11.4 & 30/30 & 1\textsuperscript{(+)}3\textsuperscript{(*)}4\textsuperscript{(+)} & 5976.7 & 13.0 & 30/30 & 1\textsuperscript{(+)}2\textsuperscript{(*)}4\textsuperscript{(+)} & 5947.0 & 28.4 & 30/30 & 1\textsuperscript{(*)}2\textsuperscript{(-)}3\textsuperscript{(-)} \\
 &  & 0.90 & 5877.4 & 79.5 & 27/30 & 2\textsuperscript{(-)}3\textsuperscript{(-)}4\textsuperscript{(*)} & 5976.0 & 13.3 & 30/30 & 1\textsuperscript{(+)}3\textsuperscript{(*)}4\textsuperscript{(+)} & \textbf{5977.3} & 11.4 & 30/30 & 1\textsuperscript{(+)}2\textsuperscript{(*)}4\textsuperscript{(+)} & 5947.0 & 28.4 & 30/30 & 1\textsuperscript{(*)}2\textsuperscript{(-)}3\textsuperscript{(-)} \\
\bottomrule
\end{tabular}
}
\end{table}

\begin{table}[!t]
\centering
\caption{Mean, standard deviation, statistical comparisons, and feasible-run count for the best feasible ore production under S2 across all $\kappa$ values.}
\label{tab:f1_setting_S2}
\resizebox{\textwidth}{!}{
\begin{tabular}{l c c rrrr rrrr rrrr rrrr}
\toprule
Instance & $\kappa$ & $\alpha$ & \multicolumn{4}{c}{GSEMO\textsuperscript{(1)}} & \multicolumn{4}{c}{NSGA-II\textsuperscript{(2)}} & \multicolumn{4}{c}{NSGA-III\textsuperscript{(3)}} & \multicolumn{4}{c}{MOEA/D\textsuperscript{(4)}} \\
\cmidrule(lr){4-7} \cmidrule(lr){8-11} \cmidrule(lr){12-15} \cmidrule(lr){16-19}
 & & & Mean & Std & Feas. & Stat & Mean & Std & Feas. & Stat & Mean & Std & Feas. & Stat & Mean & Std & Feas. & Stat \\
\midrule
opm1 & 0.10 & 0.60 & 5877.0 & 78.9 & 27/30 & 2\textsuperscript{(-)}3\textsuperscript{(-)}4\textsuperscript{(*)} & \textbf{5979.3} & 11.1 & 30/30 & 1\textsuperscript{(+)}3\textsuperscript{(*)}4\textsuperscript{(+)} & \textbf{5979.3} & 12.3 & 30/30 & 1\textsuperscript{(+)}2\textsuperscript{(*)}4\textsuperscript{(+)} & 5947.7 & 34.8 & 30/30 & 1\textsuperscript{(*)}2\textsuperscript{(-)}3\textsuperscript{(-)} \\
 &  & 0.70 & 5877.0 & 78.9 & 27/30 & 2\textsuperscript{(-)}3\textsuperscript{(-)}4\textsuperscript{(*)} & \textbf{5979.3} & 11.1 & 30/30 & 1\textsuperscript{(+)}3\textsuperscript{(*)}4\textsuperscript{(+)} & \textbf{5979.3} & 12.3 & 30/30 & 1\textsuperscript{(+)}2\textsuperscript{(*)}4\textsuperscript{(+)} & 5947.7 & 34.8 & 30/30 & 1\textsuperscript{(*)}2\textsuperscript{(-)}3\textsuperscript{(-)} \\
 &  & 0.80 & 5877.0 & 78.9 & 27/30 & 2\textsuperscript{(-)}3\textsuperscript{(-)}4\textsuperscript{(*)} & \textbf{5979.3} & 11.1 & 30/30 & 1\textsuperscript{(+)}3\textsuperscript{(*)}4\textsuperscript{(+)} & \textbf{5979.3} & 12.3 & 30/30 & 1\textsuperscript{(+)}2\textsuperscript{(*)}4\textsuperscript{(+)} & 5947.7 & 34.8 & 30/30 & 1\textsuperscript{(*)}2\textsuperscript{(-)}3\textsuperscript{(-)} \\
 &  & 0.90 & 5883.2 & 78.8 & 25/30 & 2\textsuperscript{(-)}3\textsuperscript{(-)}4\textsuperscript{(*)} & \textbf{5980.0} & 9.1 & 30/30 & 1\textsuperscript{(+)}3\textsuperscript{(*)}4\textsuperscript{(+)} & 5979.3 & 11.1 & 30/30 & 1\textsuperscript{(+)}2\textsuperscript{(*)}4\textsuperscript{(+)} & 5947.7 & 34.8 & 30/30 & 1\textsuperscript{(*)}2\textsuperscript{(-)}3\textsuperscript{(-)} \\
\cmidrule(lr){2-19}
 & 0.12 & 0.60 & 5877.0 & 78.9 & 27/30 & 2\textsuperscript{(-)}3\textsuperscript{(-)}4\textsuperscript{(*)} & \textbf{5979.3} & 11.1 & 30/30 & 1\textsuperscript{(+)}3\textsuperscript{(*)}4\textsuperscript{(+)} & \textbf{5979.3} & 12.3 & 30/30 & 1\textsuperscript{(+)}2\textsuperscript{(*)}4\textsuperscript{(+)} & 5947.7 & 34.8 & 30/30 & 1\textsuperscript{(*)}2\textsuperscript{(-)}3\textsuperscript{(-)} \\
 &  & 0.70 & 5877.0 & 78.9 & 27/30 & 2\textsuperscript{(-)}3\textsuperscript{(-)}4\textsuperscript{(*)} & \textbf{5979.3} & 11.1 & 30/30 & 1\textsuperscript{(+)}3\textsuperscript{(*)}4\textsuperscript{(+)} & \textbf{5979.3} & 12.3 & 30/30 & 1\textsuperscript{(+)}2\textsuperscript{(*)}4\textsuperscript{(+)} & 5947.7 & 34.8 & 30/30 & 1\textsuperscript{(*)}2\textsuperscript{(-)}3\textsuperscript{(-)} \\
 &  & 0.80 & 5877.0 & 78.9 & 27/30 & 2\textsuperscript{(-)}3\textsuperscript{(-)}4\textsuperscript{(*)} & \textbf{5980.0} & 10.5 & 30/30 & 1\textsuperscript{(+)}3\textsuperscript{(*)}4\textsuperscript{(+)} & 5978.7 & 11.7 & 30/30 & 1\textsuperscript{(+)}2\textsuperscript{(*)}4\textsuperscript{(+)} & 5947.7 & 34.8 & 30/30 & 1\textsuperscript{(*)}2\textsuperscript{(-)}3\textsuperscript{(-)} \\
 &  & 0.90 & - & - & 0/30 &  & - & - & 0/30 &  & - & - & 0/30 &  & - & - & 0/30 &  \\
\cmidrule(lr){2-19}
 & 0.15 & 0.60 & 5877.0 & 78.9 & 27/30 & 2\textsuperscript{(-)}3\textsuperscript{(-)}4\textsuperscript{(*)} & \textbf{5979.3} & 11.1 & 30/30 & 1\textsuperscript{(+)}3\textsuperscript{(*)}4\textsuperscript{(+)} & \textbf{5979.3} & 12.3 & 30/30 & 1\textsuperscript{(+)}2\textsuperscript{(*)}4\textsuperscript{(+)} & 5947.7 & 34.8 & 30/30 & 1\textsuperscript{(*)}2\textsuperscript{(-)}3\textsuperscript{(-)} \\
 &  & 0.70 & 5877.0 & 78.9 & 27/30 & 2\textsuperscript{(-)}3\textsuperscript{(-)}4\textsuperscript{(*)} & \textbf{5980.0} & 10.5 & 30/30 & 1\textsuperscript{(+)}3\textsuperscript{(*)}4\textsuperscript{(+)} & 5978.7 & 11.7 & 30/30 & 1\textsuperscript{(+)}2\textsuperscript{(*)}4\textsuperscript{(+)} & 5947.7 & 34.8 & 30/30 & 1\textsuperscript{(*)}2\textsuperscript{(-)}3\textsuperscript{(-)} \\
 &  & 0.80 & - & - & 0/30 &  & - & - & 0/30 &  & - & - & 0/30 &  & - & - & 0/30 &  \\
 &  & 0.90 & - & - & 0/30 &  & - & - & 0/30 &  & - & - & 0/30 &  & - & - & 0/30 &  \\
\midrule
opm2 & 0.10 & 0.60 & 5870.7 & 79.9 & 28/30 & 2\textsuperscript{(-)}3\textsuperscript{(-)}4\textsuperscript{(*)} & \textbf{5975.7} & 13.8 & 30/30 & 1\textsuperscript{(+)}3\textsuperscript{(*)}4\textsuperscript{(+)} & 5975.0 & 14.1 & 30/30 & 1\textsuperscript{(+)}2\textsuperscript{(*)}4\textsuperscript{(+)} & 5949.3 & 28.8 & 30/30 & 1\textsuperscript{(*)}2\textsuperscript{(-)}3\textsuperscript{(-)} \\
 &  & 0.70 & 5870.4 & 79.3 & 28/30 & 2\textsuperscript{(-)}3\textsuperscript{(-)}4\textsuperscript{(*)} & 5975.0 & 14.1 & 30/30 & 1\textsuperscript{(+)}3\textsuperscript{(*)}4\textsuperscript{(+)} & \textbf{5976.3} & 10.0 & 30/30 & 1\textsuperscript{(+)}2\textsuperscript{(*)}4\textsuperscript{(+)} & 5949.3 & 28.8 & 30/30 & 1\textsuperscript{(*)}2\textsuperscript{(-)}3\textsuperscript{(-)} \\
 &  & 0.80 & 5870.7 & 79.9 & 28/30 & 2\textsuperscript{(-)}3\textsuperscript{(-)}4\textsuperscript{(*)} & \textbf{5975.7} & 13.8 & 30/30 & 1\textsuperscript{(+)}3\textsuperscript{(*)}4\textsuperscript{(+)} & \textbf{5975.7} & 12.8 & 30/30 & 1\textsuperscript{(+)}2\textsuperscript{(*)}4\textsuperscript{(+)} & 5950.0 & 28.3 & 30/30 & 1\textsuperscript{(*)}2\textsuperscript{(-)}3\textsuperscript{(-)} \\
 &  & 0.90 & 5872.5 & 77.7 & 20/30 & 2\textsuperscript{(-)}3\textsuperscript{(-)}4\textsuperscript{(*)} & \textbf{5976.3} & 12.5 & 30/30 & 1\textsuperscript{(+)}3\textsuperscript{(*)}4\textsuperscript{(+)} & 5975.7 & 13.8 & 30/30 & 1\textsuperscript{(+)}2\textsuperscript{(*)}4\textsuperscript{(+)} & 5949.3 & 27.3 & 30/30 & 1\textsuperscript{(*)}2\textsuperscript{(-)}3\textsuperscript{(-)} \\
\cmidrule(lr){2-19}
 & 0.12 & 0.60 & 5873.3 & 80.1 & 27/30 & 2\textsuperscript{(-)}3\textsuperscript{(-)}4\textsuperscript{(*)} & \textbf{5978.3} & 6.5 & 30/30 & 1\textsuperscript{(+)}3\textsuperscript{(*)}4\textsuperscript{(+)} & 5977.0 & 10.9 & 30/30 & 1\textsuperscript{(+)}2\textsuperscript{(*)}4\textsuperscript{(+)} & 5951.3 & 27.3 & 30/30 & 1\textsuperscript{(*)}2\textsuperscript{(-)}3\textsuperscript{(-)} \\
 &  & 0.70 & 5873.3 & 80.1 & 27/30 & 2\textsuperscript{(-)}3\textsuperscript{(-)}4\textsuperscript{(*)} & 5977.0 & 12.1 & 30/30 & 1\textsuperscript{(+)}3\textsuperscript{(*)}4\textsuperscript{(+)} & \textbf{5977.7} & 9.0 & 30/30 & 1\textsuperscript{(+)}2\textsuperscript{(*)}4\textsuperscript{(+)} & 5950.0 & 27.8 & 30/30 & 1\textsuperscript{(*)}2\textsuperscript{(-)}3\textsuperscript{(-)} \\
 &  & 0.80 & 5873.0 & 79.6 & 27/30 & 2\textsuperscript{(-)}3\textsuperscript{(-)}4\textsuperscript{(*)} & \textbf{5977.0} & 10.9 & 30/30 & 1\textsuperscript{(+)}3\textsuperscript{(*)}4\textsuperscript{(+)} & 5975.0 & 15.0 & 30/30 & 1\textsuperscript{(+)}2\textsuperscript{(*)}4\textsuperscript{(+)} & 5950.0 & 27.8 & 30/30 & 1\textsuperscript{(*)}2\textsuperscript{(-)}3\textsuperscript{(-)} \\
 &  & 0.90 & - & - & 0/30 &  & - & - & 0/30 &  & - & - & 0/30 &  & - & - & 0/30 &  \\
\cmidrule(lr){2-19}
 & 0.15 & 0.60 & 5873.3 & 80.1 & 27/30 & 2\textsuperscript{(-)}3\textsuperscript{(-)}4\textsuperscript{(*)} & \textbf{5977.0} & 9.5 & 30/30 & 1\textsuperscript{(+)}3\textsuperscript{(*)}4\textsuperscript{(+)} & \textbf{5977.0} & 12.1 & 30/30 & 1\textsuperscript{(+)}2\textsuperscript{(*)}4\textsuperscript{(+)} & 5949.3 & 26.8 & 30/30 & 1\textsuperscript{(*)}2\textsuperscript{(-)}3\textsuperscript{(-)} \\
 &  & 0.70 & 5873.1 & 78.8 & 26/30 & 2\textsuperscript{(-)}3\textsuperscript{(-)}4\textsuperscript{(*)} & \textbf{5977.0} & 9.5 & 30/30 & 1\textsuperscript{(+)}3\textsuperscript{(*)}4\textsuperscript{(+)} & 5975.0 & 13.1 & 30/30 & 1\textsuperscript{(+)}2\textsuperscript{(*)}4\textsuperscript{(+)} & 5949.3 & 26.8 & 30/30 & 1\textsuperscript{(*)}2\textsuperscript{(-)}3\textsuperscript{(-)} \\
 &  & 0.80 & - & - & 0/30 &  & - & - & 0/30 &  & - & - & 0/30 &  & - & - & 0/30 &  \\
 &  & 0.90 & - & - & 0/30 &  & - & - & 0/30 &  & - & - & 0/30 &  & - & - & 0/30 &  \\
\midrule
opm5 & 0.10 & 0.60 & 5877.0 & 78.9 & 27/30 & 2\textsuperscript{(-)}3\textsuperscript{(-)}4\textsuperscript{(*)} & \textbf{5979.3} & 11.1 & 30/30 & 1\textsuperscript{(+)}3\textsuperscript{(*)}4\textsuperscript{(+)} & \textbf{5979.3} & 12.3 & 30/30 & 1\textsuperscript{(+)}2\textsuperscript{(*)}4\textsuperscript{(+)} & 5947.7 & 34.8 & 30/30 & 1\textsuperscript{(*)}2\textsuperscript{(-)}3\textsuperscript{(-)} \\
 &  & 0.70 & 5877.0 & 78.9 & 27/30 & 2\textsuperscript{(-)}3\textsuperscript{(-)}4\textsuperscript{(*)} & \textbf{5979.3} & 11.1 & 30/30 & 1\textsuperscript{(+)}3\textsuperscript{(*)}4\textsuperscript{(+)} & \textbf{5979.3} & 12.3 & 30/30 & 1\textsuperscript{(+)}2\textsuperscript{(*)}4\textsuperscript{(+)} & 5947.7 & 34.8 & 30/30 & 1\textsuperscript{(*)}2\textsuperscript{(-)}3\textsuperscript{(-)} \\
 &  & 0.80 & 5877.0 & 78.9 & 27/30 & 2\textsuperscript{(-)}3\textsuperscript{(-)}4\textsuperscript{(*)} & \textbf{5979.3} & 11.1 & 30/30 & 1\textsuperscript{(+)}3\textsuperscript{(*)}4\textsuperscript{(+)} & \textbf{5979.3} & 12.3 & 30/30 & 1\textsuperscript{(+)}2\textsuperscript{(*)}4\textsuperscript{(+)} & 5947.7 & 34.8 & 30/30 & 1\textsuperscript{(*)}2\textsuperscript{(-)}3\textsuperscript{(-)} \\
 &  & 0.90 & 5877.0 & 78.9 & 27/30 & 2\textsuperscript{(-)}3\textsuperscript{(-)}4\textsuperscript{(*)} & \textbf{5979.3} & 11.1 & 30/30 & 1\textsuperscript{(+)}3\textsuperscript{(*)}4\textsuperscript{(+)} & \textbf{5979.3} & 12.3 & 30/30 & 1\textsuperscript{(+)}2\textsuperscript{(*)}4\textsuperscript{(+)} & 5947.7 & 34.8 & 30/30 & 1\textsuperscript{(*)}2\textsuperscript{(-)}3\textsuperscript{(-)} \\
\cmidrule(lr){2-19}
 & 0.12 & 0.60 & 5877.0 & 78.9 & 27/30 & 2\textsuperscript{(-)}3\textsuperscript{(-)}4\textsuperscript{(*)} & \textbf{5979.3} & 11.1 & 30/30 & 1\textsuperscript{(+)}3\textsuperscript{(*)}4\textsuperscript{(+)} & \textbf{5979.3} & 12.3 & 30/30 & 1\textsuperscript{(+)}2\textsuperscript{(*)}4\textsuperscript{(+)} & 5947.7 & 34.8 & 30/30 & 1\textsuperscript{(*)}2\textsuperscript{(-)}3\textsuperscript{(-)} \\
 &  & 0.70 & 5877.0 & 78.9 & 27/30 & 2\textsuperscript{(-)}3\textsuperscript{(-)}4\textsuperscript{(*)} & \textbf{5979.3} & 11.1 & 30/30 & 1\textsuperscript{(+)}3\textsuperscript{(*)}4\textsuperscript{(+)} & \textbf{5979.3} & 12.3 & 30/30 & 1\textsuperscript{(+)}2\textsuperscript{(*)}4\textsuperscript{(+)} & 5947.7 & 34.8 & 30/30 & 1\textsuperscript{(*)}2\textsuperscript{(-)}3\textsuperscript{(-)} \\
 &  & 0.80 & 5877.0 & 78.9 & 27/30 & 2\textsuperscript{(-)}3\textsuperscript{(-)}4\textsuperscript{(*)} & \textbf{5979.3} & 11.1 & 30/30 & 1\textsuperscript{(+)}3\textsuperscript{(*)}4\textsuperscript{(+)} & \textbf{5979.3} & 12.3 & 30/30 & 1\textsuperscript{(+)}2\textsuperscript{(*)}4\textsuperscript{(+)} & 5947.7 & 34.8 & 30/30 & 1\textsuperscript{(*)}2\textsuperscript{(-)}3\textsuperscript{(-)} \\
 &  & 0.90 & 5877.0 & 78.9 & 27/30 & 2\textsuperscript{(-)}3\textsuperscript{(-)}4\textsuperscript{(*)} & \textbf{5979.3} & 11.1 & 30/30 & 1\textsuperscript{(+)}3\textsuperscript{(*)}4\textsuperscript{(+)} & \textbf{5979.3} & 12.3 & 30/30 & 1\textsuperscript{(+)}2\textsuperscript{(*)}4\textsuperscript{(+)} & 5947.7 & 34.8 & 30/30 & 1\textsuperscript{(*)}2\textsuperscript{(-)}3\textsuperscript{(-)} \\
\cmidrule(lr){2-19}
 & 0.15 & 0.60 & 5877.0 & 78.9 & 27/30 & 2\textsuperscript{(-)}3\textsuperscript{(-)}4\textsuperscript{(*)} & \textbf{5979.3} & 11.1 & 30/30 & 1\textsuperscript{(+)}3\textsuperscript{(*)}4\textsuperscript{(+)} & \textbf{5979.3} & 12.3 & 30/30 & 1\textsuperscript{(+)}2\textsuperscript{(*)}4\textsuperscript{(+)} & 5947.7 & 34.8 & 30/30 & 1\textsuperscript{(*)}2\textsuperscript{(-)}3\textsuperscript{(-)} \\
 &  & 0.70 & 5877.0 & 78.9 & 27/30 & 2\textsuperscript{(-)}3\textsuperscript{(-)}4\textsuperscript{(*)} & \textbf{5979.3} & 11.1 & 30/30 & 1\textsuperscript{(+)}3\textsuperscript{(*)}4\textsuperscript{(+)} & \textbf{5979.3} & 12.3 & 30/30 & 1\textsuperscript{(+)}2\textsuperscript{(*)}4\textsuperscript{(+)} & 5947.7 & 34.8 & 30/30 & 1\textsuperscript{(*)}2\textsuperscript{(-)}3\textsuperscript{(-)} \\
 &  & 0.80 & 5877.0 & 78.9 & 27/30 & 2\textsuperscript{(-)}3\textsuperscript{(-)}4\textsuperscript{(*)} & \textbf{5979.3} & 11.1 & 30/30 & 1\textsuperscript{(+)}3\textsuperscript{(*)}4\textsuperscript{(+)} & \textbf{5979.3} & 12.3 & 30/30 & 1\textsuperscript{(+)}2\textsuperscript{(*)}4\textsuperscript{(+)} & 5947.7 & 34.8 & 30/30 & 1\textsuperscript{(*)}2\textsuperscript{(-)}3\textsuperscript{(-)} \\
 &  & 0.90 & 5877.0 & 78.9 & 27/30 & 2\textsuperscript{(-)}3\textsuperscript{(-)}4\textsuperscript{(*)} & 5978.0 & 13.2 & 30/30 & 1\textsuperscript{(+)}3\textsuperscript{(*)}4\textsuperscript{(+)} & \textbf{5980.7} & 9.8 & 30/30 & 1\textsuperscript{(+)}2\textsuperscript{(*)}4\textsuperscript{(+)} & 5947.7 & 34.8 & 30/30 & 1\textsuperscript{(*)}2\textsuperscript{(-)}3\textsuperscript{(-)} \\
\midrule
opm6 & 0.10 & 0.60 & 5872.3 & 81.6 & 26/30 & 2\textsuperscript{(-)}3\textsuperscript{(-)}4\textsuperscript{(*)} & 5973.7 & 12.5 & 30/30 & 1\textsuperscript{(+)}3\textsuperscript{(*)}4\textsuperscript{(+)} & \textbf{5976.3} & 8.5 & 30/30 & 1\textsuperscript{(+)}2\textsuperscript{(*)}4\textsuperscript{(+)} & 5946.7 & 30.9 & 30/30 & 1\textsuperscript{(*)}2\textsuperscript{(-)}3\textsuperscript{(-)} \\
 &  & 0.70 & 5872.3 & 81.6 & 26/30 & 2\textsuperscript{(-)}3\textsuperscript{(-)}4\textsuperscript{(*)} & 5973.7 & 12.5 & 30/30 & 1\textsuperscript{(+)}3\textsuperscript{(*)}4\textsuperscript{(+)} & \textbf{5976.3} & 8.5 & 30/30 & 1\textsuperscript{(+)}2\textsuperscript{(*)}4\textsuperscript{(+)} & 5946.7 & 30.9 & 30/30 & 1\textsuperscript{(*)}2\textsuperscript{(-)}3\textsuperscript{(-)} \\
 &  & 0.80 & 5872.3 & 81.6 & 26/30 & 2\textsuperscript{(-)}3\textsuperscript{(-)}4\textsuperscript{(*)} & 5973.0 & 12.6 & 30/30 & 1\textsuperscript{(+)}3\textsuperscript{(*)}4\textsuperscript{(+)} & \textbf{5975.0} & 9.4 & 30/30 & 1\textsuperscript{(+)}2\textsuperscript{(*)}4\textsuperscript{(+)} & 5946.7 & 30.9 & 30/30 & 1\textsuperscript{(*)}2\textsuperscript{(-)}3\textsuperscript{(-)} \\
 &  & 0.90 & 5870.8 & 84.0 & 24/30 & 2\textsuperscript{(-)}3\textsuperscript{(-)}4\textsuperscript{(*)} & \textbf{5974.7} & 10.4 & 30/30 & 1\textsuperscript{(+)}3\textsuperscript{(*)}4\textsuperscript{(+)} & 5972.0 & 14.5 & 30/30 & 1\textsuperscript{(+)}2\textsuperscript{(*)}4\textsuperscript{(+)} & 5943.3 & 31.7 & 30/30 & 1\textsuperscript{(*)}2\textsuperscript{(-)}3\textsuperscript{(-)} \\
\cmidrule(lr){2-19}
 & 0.12 & 0.60 & 5872.3 & 81.6 & 26/30 & 2\textsuperscript{(-)}3\textsuperscript{(-)}4\textsuperscript{(*)} & 5974.0 & 13.0 & 30/30 & 1\textsuperscript{(+)}3\textsuperscript{(*)}4\textsuperscript{(+)} & \textbf{5976.7} & 9.2 & 30/30 & 1\textsuperscript{(+)}2\textsuperscript{(*)}4\textsuperscript{(+)} & 5949.0 & 29.8 & 30/30 & 1\textsuperscript{(*)}2\textsuperscript{(-)}3\textsuperscript{(-)} \\
 &  & 0.70 & 5872.3 & 81.6 & 26/30 & 2\textsuperscript{(-)}3\textsuperscript{(-)}4\textsuperscript{(*)} & 5974.0 & 13.0 & 30/30 & 1\textsuperscript{(+)}3\textsuperscript{(*)}4\textsuperscript{(+)} & \textbf{5976.7} & 9.2 & 30/30 & 1\textsuperscript{(+)}2\textsuperscript{(*)}4\textsuperscript{(+)} & 5949.0 & 29.8 & 30/30 & 1\textsuperscript{(*)}2\textsuperscript{(-)}3\textsuperscript{(-)} \\
 &  & 0.80 & 5871.9 & 80.9 & 26/30 & 2\textsuperscript{(-)}3\textsuperscript{(-)}4\textsuperscript{(*)} & \textbf{5974.0} & 11.9 & 30/30 & 1\textsuperscript{(+)}3\textsuperscript{(*)}4\textsuperscript{(+)} & 5973.3 & 14.2 & 30/30 & 1\textsuperscript{(+)}2\textsuperscript{(*)}4\textsuperscript{(+)} & 5949.0 & 29.8 & 30/30 & 1\textsuperscript{(*)}2\textsuperscript{(-)}3\textsuperscript{(-)} \\
 &  & 0.90 & 5870.0 & 82.3 & 10/30 & 2\textsuperscript{(-)}3\textsuperscript{(*)}4\textsuperscript{(*)} & \textbf{5947.0} & 37.2 & 30/30 & 1\textsuperscript{(+)}3\textsuperscript{(*)}4\textsuperscript{(+)} & 5936.3 & 36.3 & 30/30 & 1\textsuperscript{(*)}2\textsuperscript{(*)}4\textsuperscript{(+)} & 5899.0 & 45.5 & 30/30 & 1\textsuperscript{(*)}2\textsuperscript{(-)}3\textsuperscript{(-)} \\
\cmidrule(lr){2-19}
 & 0.15 & 0.60 & 5872.3 & 81.6 & 26/30 & 2\textsuperscript{(-)}3\textsuperscript{(-)}4\textsuperscript{(*)} & 5973.3 & 13.2 & 30/30 & 1\textsuperscript{(+)}3\textsuperscript{(*)}4\textsuperscript{(+)} & \textbf{5975.3} & 8.6 & 30/30 & 1\textsuperscript{(+)}2\textsuperscript{(*)}4\textsuperscript{(+)} & 5945.7 & 34.2 & 30/30 & 1\textsuperscript{(*)}2\textsuperscript{(-)}3\textsuperscript{(-)} \\
 &  & 0.70 & 5871.9 & 81.0 & 26/30 & 2\textsuperscript{(-)}3\textsuperscript{(-)}4\textsuperscript{(*)} & \textbf{5974.0} & 11.9 & 30/30 & 1\textsuperscript{(+)}3\textsuperscript{(*)}4\textsuperscript{(+)} & \textbf{5974.0} & 14.0 & 30/30 & 1\textsuperscript{(+)}2\textsuperscript{(*)}4\textsuperscript{(+)} & 5947.7 & 30.6 & 30/30 & 1\textsuperscript{(*)}2\textsuperscript{(-)}3\textsuperscript{(-)} \\
 &  & 0.80 & 5870.0 & 82.3 & 10/30 & 2\textsuperscript{(-)}3\textsuperscript{(-)}4\textsuperscript{(*)} & \textbf{5955.0} & 22.4 & 30/30 & 1\textsuperscript{(+)}3\textsuperscript{(*)}4\textsuperscript{(+)} & 5954.7 & 26.2 & 30/30 & 1\textsuperscript{(+)}2\textsuperscript{(*)}4\textsuperscript{(+)} & 5906.7 & 43.4 & 30/30 & 1\textsuperscript{(*)}2\textsuperscript{(-)}3\textsuperscript{(-)} \\
 &  & 0.90 & 5842.9 & 53.5 & 7/30 & 2\textsuperscript{(-)}3\textsuperscript{(-)}4\textsuperscript{(*)} & \textbf{5957.3} & 24.5 & 30/30 & 1\textsuperscript{(+)}3\textsuperscript{(*)}4\textsuperscript{(+)} & 5953.3 & 27.5 & 30/30 & 1\textsuperscript{(+)}2\textsuperscript{(*)}4\textsuperscript{(+)} & 5915.0 & 43.5 & 30/30 & 1\textsuperscript{(*)}2\textsuperscript{(-)}3\textsuperscript{(-)} \\
\bottomrule
\end{tabular}
}
\end{table}

\begin{table}[!t]
\centering
\caption{Mean, standard deviation, statistical comparisons, and feasible-run count for the best feasible ore production under S3 across all $\kappa$ values.}
\label{tab:f1_setting_S3}
\resizebox{\textwidth}{!}{
\begin{tabular}{l c c rrrr rrrr rrrr rrrr}
\toprule
Instance & $\kappa$ & $\alpha$ & \multicolumn{4}{c}{GSEMO\textsuperscript{(1)}} & \multicolumn{4}{c}{NSGA-II\textsuperscript{(2)}} & \multicolumn{4}{c}{NSGA-III\textsuperscript{(3)}} & \multicolumn{4}{c}{MOEA/D\textsuperscript{(4)}} \\
\cmidrule(lr){4-7} \cmidrule(lr){8-11} \cmidrule(lr){12-15} \cmidrule(lr){16-19}
 & & & Mean & Std & Feas. & Stat & Mean & Std & Feas. & Stat & Mean & Std & Feas. & Stat & Mean & Std & Feas. & Stat \\
\midrule
opm1 & 0.10 & 0.60 & 5879.2 & 77.9 & 26/30 & 2\textsuperscript{(-)}3\textsuperscript{(-)}4\textsuperscript{(*)} & \textbf{5980.0} & 0.0 & 30/30 & 1\textsuperscript{(+)}3\textsuperscript{(*)}4\textsuperscript{(+)} & 5975.0 & 13.1 & 30/30 & 1\textsuperscript{(+)}2\textsuperscript{(*)}4\textsuperscript{(+)} & 5939.0 & 34.7 & 30/30 & 1\textsuperscript{(*)}2\textsuperscript{(-)}3\textsuperscript{(-)} \\
 &  & 0.70 & 5879.2 & 77.9 & 26/30 & 2\textsuperscript{(-)}3\textsuperscript{(-)}4\textsuperscript{(*)} & \textbf{5980.0} & 0.0 & 30/30 & 1\textsuperscript{(+)}3\textsuperscript{(*)}4\textsuperscript{(+)} & 5975.0 & 13.1 & 30/30 & 1\textsuperscript{(+)}2\textsuperscript{(*)}4\textsuperscript{(+)} & 5939.0 & 34.7 & 30/30 & 1\textsuperscript{(*)}2\textsuperscript{(-)}3\textsuperscript{(-)} \\
 &  & 0.80 & 5879.2 & 77.9 & 26/30 & 2\textsuperscript{(-)}3\textsuperscript{(-)}4\textsuperscript{(*)} & \textbf{5980.0} & 0.0 & 30/30 & 1\textsuperscript{(+)}3\textsuperscript{(*)}4\textsuperscript{(+)} & 5975.0 & 13.1 & 30/30 & 1\textsuperscript{(+)}2\textsuperscript{(*)}4\textsuperscript{(+)} & 5939.0 & 34.7 & 30/30 & 1\textsuperscript{(*)}2\textsuperscript{(-)}3\textsuperscript{(-)} \\
 &  & 0.90 & 5882.0 & 77.0 & 20/30 & 2\textsuperscript{(-)}3\textsuperscript{(-)}4\textsuperscript{(*)} & \textbf{5977.3} & 8.7 & 30/30 & 1\textsuperscript{(+)}3\textsuperscript{(*)}4\textsuperscript{(+)} & 5975.7 & 12.8 & 30/30 & 1\textsuperscript{(+)}2\textsuperscript{(*)}4\textsuperscript{(+)} & 5939.0 & 34.7 & 30/30 & 1\textsuperscript{(*)}2\textsuperscript{(-)}3\textsuperscript{(-)} \\
\cmidrule(lr){2-19}
 & 0.12 & 0.60 & 5879.2 & 77.9 & 26/30 & 2\textsuperscript{(-)}3\textsuperscript{(-)}4\textsuperscript{(*)} & \textbf{5980.0} & 0.0 & 30/30 & 1\textsuperscript{(+)}3\textsuperscript{(*)}4\textsuperscript{(+)} & 5975.0 & 13.1 & 30/30 & 1\textsuperscript{(+)}2\textsuperscript{(*)}4\textsuperscript{(+)} & 5939.0 & 34.7 & 30/30 & 1\textsuperscript{(*)}2\textsuperscript{(-)}3\textsuperscript{(-)} \\
 &  & 0.70 & 5879.2 & 77.9 & 26/30 & 2\textsuperscript{(-)}3\textsuperscript{(-)}4\textsuperscript{(*)} & \textbf{5980.0} & 0.0 & 30/30 & 1\textsuperscript{(+)}3\textsuperscript{(*)}4\textsuperscript{(+)} & 5975.0 & 13.1 & 30/30 & 1\textsuperscript{(+)}2\textsuperscript{(*)}4\textsuperscript{(+)} & 5939.0 & 34.7 & 30/30 & 1\textsuperscript{(*)}2\textsuperscript{(-)}3\textsuperscript{(-)} \\
 &  & 0.80 & 5879.0 & 78.6 & 21/30 & 2\textsuperscript{(-)}3\textsuperscript{(-)}4\textsuperscript{(*)} & \textbf{5978.3} & 6.5 & 30/30 & 1\textsuperscript{(+)}3\textsuperscript{(*)}4\textsuperscript{(+)} & 5975.0 & 13.1 & 30/30 & 1\textsuperscript{(+)}2\textsuperscript{(*)}4\textsuperscript{(+)} & 5939.0 & 34.7 & 30/30 & 1\textsuperscript{(*)}2\textsuperscript{(-)}3\textsuperscript{(-)} \\
 &  & 0.90 & - & - & 0/30 &  & - & - & 0/30 &  & - & - & 0/30 &  & - & - & 0/30 &  \\
\cmidrule(lr){2-19}
 & 0.15 & 0.60 & 5879.2 & 77.9 & 26/30 & 2\textsuperscript{(-)}3\textsuperscript{(-)}4\textsuperscript{(*)} & \textbf{5980.0} & 0.0 & 30/30 & 1\textsuperscript{(+)}3\textsuperscript{(*)}4\textsuperscript{(+)} & 5975.0 & 13.1 & 30/30 & 1\textsuperscript{(+)}2\textsuperscript{(*)}4\textsuperscript{(+)} & 5939.0 & 34.7 & 30/30 & 1\textsuperscript{(*)}2\textsuperscript{(-)}3\textsuperscript{(-)} \\
 &  & 0.70 & 5879.0 & 78.6 & 21/30 & 2\textsuperscript{(-)}3\textsuperscript{(-)}4\textsuperscript{(*)} & \textbf{5978.3} & 6.5 & 30/30 & 1\textsuperscript{(+)}3\textsuperscript{(*)}4\textsuperscript{(+)} & 5975.7 & 12.8 & 30/30 & 1\textsuperscript{(+)}2\textsuperscript{(*)}4\textsuperscript{(+)} & 5939.0 & 34.7 & 30/30 & 1\textsuperscript{(*)}2\textsuperscript{(-)}3\textsuperscript{(-)} \\
 &  & 0.80 & - & - & 0/30 &  & - & - & 0/30 &  & - & - & 0/30 &  & - & - & 0/30 &  \\
 &  & 0.90 & - & - & 0/30 &  & - & - & 0/30 &  & - & - & 0/30 &  & - & - & 0/30 &  \\
\midrule
opm2 & 0.10 & 0.60 & 5883.6 & 79.4 & 25/30 & 2\textsuperscript{(-)}3\textsuperscript{(-)}4\textsuperscript{(*)} & 5976.7 & 10.6 & 30/30 & 1\textsuperscript{(+)}3\textsuperscript{(*)}4\textsuperscript{(+)} & \textbf{5978.0} & 6.1 & 30/30 & 1\textsuperscript{(+)}2\textsuperscript{(*)}4\textsuperscript{(+)} & 5942.3 & 30.8 & 30/30 & 1\textsuperscript{(*)}2\textsuperscript{(-)}3\textsuperscript{(-)} \\
 &  & 0.70 & 5883.6 & 79.4 & 25/30 & 2\textsuperscript{(-)}3\textsuperscript{(-)}4\textsuperscript{(*)} & 5976.7 & 10.6 & 30/30 & 1\textsuperscript{(+)}3\textsuperscript{(*)}4\textsuperscript{(+)} & \textbf{5978.0} & 6.1 & 30/30 & 1\textsuperscript{(+)}2\textsuperscript{(*)}4\textsuperscript{(+)} & 5942.3 & 30.8 & 30/30 & 1\textsuperscript{(*)}2\textsuperscript{(-)}3\textsuperscript{(-)} \\
 &  & 0.80 & 5883.6 & 79.4 & 25/30 & 2\textsuperscript{(-)}3\textsuperscript{(-)}4\textsuperscript{(*)} & 5976.7 & 10.6 & 30/30 & 1\textsuperscript{(+)}3\textsuperscript{(*)}4\textsuperscript{(+)} & \textbf{5978.0} & 6.1 & 30/30 & 1\textsuperscript{(+)}2\textsuperscript{(*)}4\textsuperscript{(+)} & 5944.3 & 28.0 & 30/30 & 1\textsuperscript{(*)}2\textsuperscript{(-)}3\textsuperscript{(-)} \\
 &  & 0.90 & \textbf{5900.0} & 90.3 & 5/30 &  & - & - & 0/30 &  & - & - & 0/30 &  & - & - & 0/30 &  \\
\cmidrule(lr){2-19}
 & 0.12 & 0.60 & 5887.6 & 77.5 & 25/30 & 2\textsuperscript{(-)}3\textsuperscript{(-)}4\textsuperscript{(*)} & 5978.0 & 8.1 & 30/30 & 1\textsuperscript{(+)}3\textsuperscript{(*)}4\textsuperscript{(+)} & \textbf{5978.7} & 5.1 & 30/30 & 1\textsuperscript{(+)}2\textsuperscript{(*)}4\textsuperscript{(+)} & 5944.3 & 29.4 & 30/30 & 1\textsuperscript{(*)}2\textsuperscript{(-)}3\textsuperscript{(-)} \\
 &  & 0.70 & 5887.6 & 77.5 & 25/30 & 2\textsuperscript{(-)}3\textsuperscript{(-)}4\textsuperscript{(*)} & 5978.0 & 8.1 & 30/30 & 1\textsuperscript{(+)}3\textsuperscript{(*)}4\textsuperscript{(+)} & \textbf{5978.7} & 5.1 & 30/30 & 1\textsuperscript{(+)}2\textsuperscript{(*)}4\textsuperscript{(+)} & 5944.3 & 29.4 & 30/30 & 1\textsuperscript{(*)}2\textsuperscript{(-)}3\textsuperscript{(-)} \\
 &  & 0.80 & \textbf{5873.3} & 79.2 & 9/30 & 2\textsuperscript{(*)}3\textsuperscript{(*)}4\textsuperscript{(*)} & 5820.0 & 34.6 & 3/30 & 1\textsuperscript{(*)}3\textsuperscript{(*)}4\textsuperscript{(*)} & 5830.0 & 42.4 & 2/30 & 1\textsuperscript{(*)}2\textsuperscript{(*)}4\textsuperscript{(*)} & 5800.0 & 0.0 & 1/30 & 1\textsuperscript{(*)}2\textsuperscript{(*)}3\textsuperscript{(*)} \\
 &  & 0.90 & - & - & 0/30 &  & - & - & 0/30 &  & - & - & 0/30 &  & - & - & 0/30 &  \\
\cmidrule(lr){2-19}
 & 0.15 & 0.60 & 5884.2 & 77.9 & 26/30 & 2\textsuperscript{(-)}3\textsuperscript{(-)}4\textsuperscript{(*)} & 5976.0 & 11.0 & 30/30 & 1\textsuperscript{(+)}3\textsuperscript{(*)}4\textsuperscript{(+)} & \textbf{5976.7} & 11.8 & 30/30 & 1\textsuperscript{(+)}2\textsuperscript{(*)}4\textsuperscript{(+)} & 5941.7 & 27.3 & 30/30 & 1\textsuperscript{(*)}2\textsuperscript{(-)}3\textsuperscript{(-)} \\
 &  & 0.70 & \textbf{5871.2} & 82.7 & 8/30 &  & - & - & 0/30 &  & - & - & 0/30 &  & - & - & 0/30 &  \\
 &  & 0.80 & - & - & 0/30 &  & - & - & 0/30 &  & - & - & 0/30 &  & - & - & 0/30 &  \\
 &  & 0.90 & - & - & 0/30 &  & - & - & 0/30 &  & - & - & 0/30 &  & - & - & 0/30 &  \\
\midrule
opm5 & 0.10 & 0.60 & 5879.2 & 77.9 & 26/30 & 2\textsuperscript{(-)}3\textsuperscript{(-)}4\textsuperscript{(*)} & \textbf{5980.0} & 0.0 & 30/30 & 1\textsuperscript{(+)}3\textsuperscript{(*)}4\textsuperscript{(+)} & 5975.0 & 13.1 & 30/30 & 1\textsuperscript{(+)}2\textsuperscript{(*)}4\textsuperscript{(+)} & 5939.0 & 34.7 & 30/30 & 1\textsuperscript{(*)}2\textsuperscript{(-)}3\textsuperscript{(-)} \\
 &  & 0.70 & 5879.2 & 77.9 & 26/30 & 2\textsuperscript{(-)}3\textsuperscript{(-)}4\textsuperscript{(*)} & \textbf{5980.0} & 0.0 & 30/30 & 1\textsuperscript{(+)}3\textsuperscript{(*)}4\textsuperscript{(+)} & 5975.0 & 13.1 & 30/30 & 1\textsuperscript{(+)}2\textsuperscript{(*)}4\textsuperscript{(+)} & 5939.0 & 34.7 & 30/30 & 1\textsuperscript{(*)}2\textsuperscript{(-)}3\textsuperscript{(-)} \\
 &  & 0.80 & 5879.2 & 77.9 & 26/30 & 2\textsuperscript{(-)}3\textsuperscript{(-)}4\textsuperscript{(*)} & \textbf{5980.0} & 0.0 & 30/30 & 1\textsuperscript{(+)}3\textsuperscript{(*)}4\textsuperscript{(+)} & 5975.0 & 13.1 & 30/30 & 1\textsuperscript{(+)}2\textsuperscript{(*)}4\textsuperscript{(+)} & 5939.0 & 34.7 & 30/30 & 1\textsuperscript{(*)}2\textsuperscript{(-)}3\textsuperscript{(-)} \\
 &  & 0.90 & 5879.2 & 77.9 & 26/30 & 2\textsuperscript{(-)}3\textsuperscript{(-)}4\textsuperscript{(*)} & \textbf{5980.0} & 0.0 & 30/30 & 1\textsuperscript{(+)}3\textsuperscript{(*)}4\textsuperscript{(+)} & 5975.0 & 13.1 & 30/30 & 1\textsuperscript{(+)}2\textsuperscript{(*)}4\textsuperscript{(+)} & 5939.0 & 34.7 & 30/30 & 1\textsuperscript{(*)}2\textsuperscript{(-)}3\textsuperscript{(-)} \\
\cmidrule(lr){2-19}
 & 0.12 & 0.60 & 5879.2 & 77.9 & 26/30 & 2\textsuperscript{(-)}3\textsuperscript{(-)}4\textsuperscript{(*)} & \textbf{5980.0} & 0.0 & 30/30 & 1\textsuperscript{(+)}3\textsuperscript{(*)}4\textsuperscript{(+)} & 5975.0 & 13.1 & 30/30 & 1\textsuperscript{(+)}2\textsuperscript{(*)}4\textsuperscript{(+)} & 5939.0 & 34.7 & 30/30 & 1\textsuperscript{(*)}2\textsuperscript{(-)}3\textsuperscript{(-)} \\
 &  & 0.70 & 5879.2 & 77.9 & 26/30 & 2\textsuperscript{(-)}3\textsuperscript{(-)}4\textsuperscript{(*)} & \textbf{5980.0} & 0.0 & 30/30 & 1\textsuperscript{(+)}3\textsuperscript{(*)}4\textsuperscript{(+)} & 5975.0 & 13.1 & 30/30 & 1\textsuperscript{(+)}2\textsuperscript{(*)}4\textsuperscript{(+)} & 5939.0 & 34.7 & 30/30 & 1\textsuperscript{(*)}2\textsuperscript{(-)}3\textsuperscript{(-)} \\
 &  & 0.80 & 5879.2 & 77.9 & 26/30 & 2\textsuperscript{(-)}3\textsuperscript{(-)}4\textsuperscript{(*)} & \textbf{5980.0} & 0.0 & 30/30 & 1\textsuperscript{(+)}3\textsuperscript{(*)}4\textsuperscript{(+)} & 5975.0 & 13.1 & 30/30 & 1\textsuperscript{(+)}2\textsuperscript{(*)}4\textsuperscript{(+)} & 5939.0 & 34.7 & 30/30 & 1\textsuperscript{(*)}2\textsuperscript{(-)}3\textsuperscript{(-)} \\
 &  & 0.90 & 5879.2 & 77.9 & 26/30 & 2\textsuperscript{(-)}3\textsuperscript{(-)}4\textsuperscript{(*)} & \textbf{5980.0} & 0.0 & 30/30 & 1\textsuperscript{(+)}3\textsuperscript{(*)}4\textsuperscript{(+)} & 5975.0 & 13.1 & 30/30 & 1\textsuperscript{(+)}2\textsuperscript{(*)}4\textsuperscript{(+)} & 5939.0 & 34.7 & 30/30 & 1\textsuperscript{(*)}2\textsuperscript{(-)}3\textsuperscript{(-)} \\
\cmidrule(lr){2-19}
 & 0.15 & 0.60 & 5879.2 & 77.9 & 26/30 & 2\textsuperscript{(-)}3\textsuperscript{(-)}4\textsuperscript{(*)} & \textbf{5980.0} & 0.0 & 30/30 & 1\textsuperscript{(+)}3\textsuperscript{(*)}4\textsuperscript{(+)} & 5975.0 & 13.1 & 30/30 & 1\textsuperscript{(+)}2\textsuperscript{(*)}4\textsuperscript{(+)} & 5939.0 & 34.7 & 30/30 & 1\textsuperscript{(*)}2\textsuperscript{(-)}3\textsuperscript{(-)} \\
 &  & 0.70 & 5879.2 & 77.9 & 26/30 & 2\textsuperscript{(-)}3\textsuperscript{(-)}4\textsuperscript{(*)} & \textbf{5980.0} & 0.0 & 30/30 & 1\textsuperscript{(+)}3\textsuperscript{(*)}4\textsuperscript{(+)} & 5975.0 & 13.1 & 30/30 & 1\textsuperscript{(+)}2\textsuperscript{(*)}4\textsuperscript{(+)} & 5939.0 & 34.7 & 30/30 & 1\textsuperscript{(*)}2\textsuperscript{(-)}3\textsuperscript{(-)} \\
 &  & 0.80 & 5879.2 & 77.9 & 26/30 & 2\textsuperscript{(-)}3\textsuperscript{(-)}4\textsuperscript{(*)} & \textbf{5980.0} & 0.0 & 30/30 & 1\textsuperscript{(+)}3\textsuperscript{(*)}4\textsuperscript{(+)} & 5975.0 & 13.1 & 30/30 & 1\textsuperscript{(+)}2\textsuperscript{(*)}4\textsuperscript{(+)} & 5939.0 & 34.7 & 30/30 & 1\textsuperscript{(*)}2\textsuperscript{(-)}3\textsuperscript{(-)} \\
 &  & 0.90 & 5879.0 & 78.6 & 21/30 & 2\textsuperscript{(-)}3\textsuperscript{(-)}4\textsuperscript{(*)} & \textbf{5979.0} & 5.5 & 30/30 & 1\textsuperscript{(+)}3\textsuperscript{(*)}4\textsuperscript{(+)} & 5975.7 & 12.8 & 30/30 & 1\textsuperscript{(+)}2\textsuperscript{(*)}4\textsuperscript{(+)} & 5939.0 & 34.7 & 30/30 & 1\textsuperscript{(*)}2\textsuperscript{(-)}3\textsuperscript{(-)} \\
\midrule
opm6 & 0.10 & 0.60 & 5884.6 & 83.4 & 26/30 & 2\textsuperscript{(-)}3\textsuperscript{(-)}4\textsuperscript{(*)} & 5975.0 & 12.0 & 30/30 & 1\textsuperscript{(+)}3\textsuperscript{(*)}4\textsuperscript{(+)} & \textbf{5976.7} & 10.6 & 30/30 & 1\textsuperscript{(+)}2\textsuperscript{(*)}4\textsuperscript{(+)} & 5943.7 & 31.8 & 30/30 & 1\textsuperscript{(*)}2\textsuperscript{(-)}3\textsuperscript{(-)} \\
 &  & 0.70 & 5884.6 & 83.4 & 26/30 & 2\textsuperscript{(-)}3\textsuperscript{(-)}4\textsuperscript{(*)} & 5975.0 & 12.0 & 30/30 & 1\textsuperscript{(+)}3\textsuperscript{(*)}4\textsuperscript{(+)} & \textbf{5976.7} & 10.6 & 30/30 & 1\textsuperscript{(+)}2\textsuperscript{(*)}4\textsuperscript{(+)} & 5943.7 & 31.8 & 30/30 & 1\textsuperscript{(*)}2\textsuperscript{(-)}3\textsuperscript{(-)} \\
 &  & 0.80 & 5884.6 & 83.4 & 26/30 & 2\textsuperscript{(-)}3\textsuperscript{(-)}4\textsuperscript{(*)} & 5975.0 & 12.0 & 30/30 & 1\textsuperscript{(+)}3\textsuperscript{(*)}4\textsuperscript{(+)} & \textbf{5976.7} & 10.6 & 30/30 & 1\textsuperscript{(+)}2\textsuperscript{(*)}4\textsuperscript{(+)} & 5943.7 & 31.8 & 30/30 & 1\textsuperscript{(*)}2\textsuperscript{(-)}3\textsuperscript{(-)} \\
 &  & 0.90 & 5884.6 & 83.4 & 26/30 & 2\textsuperscript{(-)}3\textsuperscript{(-)}4\textsuperscript{(*)} & 5974.3 & 12.2 & 30/30 & 1\textsuperscript{(+)}3\textsuperscript{(*)}4\textsuperscript{(+)} & \textbf{5975.3} & 11.4 & 30/30 & 1\textsuperscript{(+)}2\textsuperscript{(*)}4\textsuperscript{(+)} & 5943.7 & 31.8 & 30/30 & 1\textsuperscript{(*)}2\textsuperscript{(-)}3\textsuperscript{(-)} \\
\cmidrule(lr){2-19}
 & 0.12 & 0.60 & 5876.2 & 80.4 & 26/30 & 2\textsuperscript{(-)}3\textsuperscript{(-)}4\textsuperscript{(*)} & \textbf{5978.7} & 9.0 & 30/30 & 1\textsuperscript{(+)}3\textsuperscript{(*)}4\textsuperscript{(+)} & 5974.0 & 13.3 & 30/30 & 1\textsuperscript{(+)}2\textsuperscript{(*)}4\textsuperscript{(+)} & 5945.0 & 30.3 & 30/30 & 1\textsuperscript{(*)}2\textsuperscript{(-)}3\textsuperscript{(-)} \\
 &  & 0.70 & 5876.2 & 80.4 & 26/30 & 2\textsuperscript{(-)}3\textsuperscript{(-)}4\textsuperscript{(*)} & \textbf{5978.7} & 9.0 & 30/30 & 1\textsuperscript{(+)}3\textsuperscript{(*)}4\textsuperscript{(+)} & 5974.0 & 13.3 & 30/30 & 1\textsuperscript{(+)}2\textsuperscript{(*)}4\textsuperscript{(+)} & 5945.0 & 30.3 & 30/30 & 1\textsuperscript{(*)}2\textsuperscript{(-)}3\textsuperscript{(-)} \\
 &  & 0.80 & 5876.2 & 80.4 & 26/30 & 2\textsuperscript{(-)}3\textsuperscript{(-)}4\textsuperscript{(*)} & \textbf{5978.7} & 9.0 & 30/30 & 1\textsuperscript{(+)}3\textsuperscript{(*)}4\textsuperscript{(+)} & 5974.0 & 13.3 & 30/30 & 1\textsuperscript{(+)}2\textsuperscript{(*)}4\textsuperscript{(+)} & 5945.0 & 30.3 & 30/30 & 1\textsuperscript{(*)}2\textsuperscript{(-)}3\textsuperscript{(-)} \\
 &  & 0.90 & 5879.2 & 80.5 & 25/30 & 2\textsuperscript{(-)}3\textsuperscript{(-)}4\textsuperscript{(*)} & 5975.3 & 11.4 & 30/30 & 1\textsuperscript{(+)}3\textsuperscript{(*)}4\textsuperscript{(+)} & \textbf{5975.7} & 10.4 & 30/30 & 1\textsuperscript{(+)}2\textsuperscript{(*)}4\textsuperscript{(+)} & 5945.0 & 30.3 & 30/30 & 1\textsuperscript{(*)}2\textsuperscript{(-)}3\textsuperscript{(-)} \\
\cmidrule(lr){2-19}
 & 0.15 & 0.60 & 5880.0 & 81.6 & 25/30 & 2\textsuperscript{(-)}3\textsuperscript{(-)}4\textsuperscript{(*)} & 5975.0 & 12.0 & 30/30 & 1\textsuperscript{(+)}3\textsuperscript{(*)}4\textsuperscript{(+)} & \textbf{5976.3} & 12.5 & 30/30 & 1\textsuperscript{(+)}2\textsuperscript{(*)}4\textsuperscript{(+)} & 5945.7 & 30.8 & 30/30 & 1\textsuperscript{(*)}2\textsuperscript{(-)}3\textsuperscript{(-)} \\
 &  & 0.70 & 5880.0 & 81.6 & 25/30 & 2\textsuperscript{(-)}3\textsuperscript{(-)}4\textsuperscript{(*)} & 5975.0 & 12.0 & 30/30 & 1\textsuperscript{(+)}3\textsuperscript{(*)}4\textsuperscript{(+)} & \textbf{5976.3} & 12.5 & 30/30 & 1\textsuperscript{(+)}2\textsuperscript{(*)}4\textsuperscript{(+)} & 5945.7 & 30.8 & 30/30 & 1\textsuperscript{(*)}2\textsuperscript{(-)}3\textsuperscript{(-)} \\
 &  & 0.80 & 5883.3 & 81.6 & 24/30 & 2\textsuperscript{(-)}3\textsuperscript{(-)}4\textsuperscript{(*)} & \textbf{5976.3} & 10.0 & 30/30 & 1\textsuperscript{(+)}3\textsuperscript{(*)}4\textsuperscript{(+)} & 5975.7 & 10.4 & 30/30 & 1\textsuperscript{(+)}2\textsuperscript{(*)}4\textsuperscript{(+)} & 5945.7 & 30.8 & 30/30 & 1\textsuperscript{(*)}2\textsuperscript{(-)}3\textsuperscript{(-)} \\
 &  & 0.90 & \textbf{5900.0} & 141.4 & 2/30 &  & - & - & 0/30 &  & - & - & 0/30 &  & - & - & 0/30 &  \\
\bottomrule
\end{tabular}
}
\end{table}

\section{Experimental Investigations}
\label{sec:experiments}

In this section, we describe the experimental setup and analyze the performance of the four algorithms under different uncertainty settings, uncertainty levels, and confidence levels.

\subsection{Experimental Settings}

We consider four benchmark instances (opm1, opm2, opm5, and opm6) derived from a real-world iron mining operation in Minas Gerais, Brazil~\cite{SOUZA20101041}. These instances have been used in previous studies of open-pit operational planning ~\cite{10.1145/3067695.3076004,alexandre2019multi, 10.1007/978-3-319-15892-1_29}. Each instance consists of 17 mining fronts, 8 shovels, and 30 trucks with heterogeneous capacities. The number of quality attributes is either 5 or 10, depending on the instance. Each quality attribute has lower and upper bounds that define the blending requirements for the processed ore. The instance characteristics are summarized in Table~\ref{tab:opm_instances}. For example, the pair $(15,50\,\text{t})$ denotes 15 trucks, each with a capacity of 50 tonnes. The planning horizon is fixed at $H=0.75$ hours.

We model the ore grades as independent normally distributed random variables, $\widetilde{q}_{oq}\sim\mathcal{N}(\mu_{oq},\sigma_{oq}^2)$. For each attribute $q$, we first compute the standard deviations $\kappa\mu_{oq}$ for all ore fronts and consider three assignments of these values to the fronts. S1 randomly assigns the values, with no systematic mean-uncertainty relationship. S2 uses $\sigma_{oq}=\kappa\mu_{oq}$, giving a positive relationship and S3 reverses the assignment, giving a negative relationship. We consider uncertainty scaling parameters $\kappa\in\{0.10,0.12,0.15\}$ and confidence levels $\alpha\in\{0.60,0.70,0.80,0.90\}$. For each confidence level, the quantile used in the deterministic chance-constraint conditions is $z_\beta=\Phi^{-1}\left((1+\alpha)/2\right)$.

We evaluate GSEMO, NSGA-II, NSGA-III, and MOEA/D. For NSGA-II, NSGA-III, and MOEA/D, the population size is $N=400$. GSEMO maintains a variable-size population and generates offspring using mutation only. For NSGA-II, NSGA-III, and MOEA/D, the crossover probability is $p_c=0.9$. For all algorithms, each trip-count and shovel-assignment variable is independently mutated with probability $1/n$, where $n$ is the total number of decision variables. The trip-relocation and truck-type replacement mutations are applied with probabilities $p_r=0.10$ and $p_t=0.05$, respectively. All algorithms are terminated after $5\times10^{5}$ fitness evaluations. For MOEA/D, the neighborhood size is set to $0.1N=40$, and at most $0.01N=4$ neighboring solutions can be replaced by each offspring. The same problem-specific mutation operator is used for all algorithms, while NSGA-II, NSGA-III, and MOEA/D use the same problem-specific crossover operator.

We evaluate each configuration over 30 independent runs using different random seeds. All experiments are conducted on single compute nodes of a high-performance computing cluster equipped with Intel\textsuperscript{\textregistered} Xeon\textsuperscript{\textregistered} Platinum 8360Y processors. Each run uses one CPU core and 500~MB of memory. All methods are implemented in Python~3.11.3 and executed in single-threaded mode.

For each run, we obtain the final non-dominated set and identify its feasible solutions. Among the feasible non-dominated solutions, we select the solution with the largest ore-production objective value $P(x)$. For each algorithm and experimental configuration, the mean and standard deviation of the best feasible $P(x)$ values are calculated over the runs that contain at least one feasible non-dominated solution. We also report the number of feasible runs out of the 30 independent runs. Runs without a feasible solution are excluded from the calculation of the mean and standard deviation. Therefore, the reported production values should be interpreted together with the corresponding number of feasible runs. We perform statistical comparisons using the Kruskal-Wallis test at a $95\%$ confidence level, followed by a Bonferroni post-hoc test for multiple comparisons~\cite{Corder_Foreman_2014}.

\begin{table*}[!t]
\centering
\caption{Mean and standard deviation of HV and IGD under S1 across all $\kappa$ values. 
}
\label{tab:hv_igd_setting_S1}
\resizebox{\textwidth}{!}{
\begin{tabular}{l c c rrrr rrrr rrrr rrrr}
\toprule
Instance & $\kappa$ & $\alpha$ & \multicolumn{4}{c}{GSEMO} & \multicolumn{4}{c}{NSGA-II} & \multicolumn{4}{c}{NSGA-III} & \multicolumn{4}{c}{MOEA/D} \\
\cmidrule(lr){4-7} \cmidrule(lr){8-11} \cmidrule(lr){12-15} \cmidrule(lr){16-19}
 & & & \multicolumn{2}{c}{HV} & \multicolumn{2}{c}{IGD} & \multicolumn{2}{c}{HV} & \multicolumn{2}{c}{IGD} & \multicolumn{2}{c}{HV} & \multicolumn{2}{c}{IGD} & \multicolumn{2}{c}{HV} & \multicolumn{2}{c}{IGD} \\
\cmidrule(lr){4-5} \cmidrule(lr){6-7} \cmidrule(lr){8-9} \cmidrule(lr){10-11} \cmidrule(lr){12-13} \cmidrule(lr){14-15} \cmidrule(lr){16-17} \cmidrule(lr){18-19}
 & & & Mean & Std & Mean & Std & Mean & Std & Mean & Std & Mean & Std & Mean & Std & Mean & Std & Mean & Std \\
\midrule
opm1 & 0.10 & 0.60 & 0.3585 & 0.3113 & 0.4509 & 0.2681 & \textbf{0.5382} & 0.0474 & \textbf{0.2784} & 0.0504 & 0.5277 & 0.0500 & 0.2853 & 0.0481 & 0.4112 & 0.0683 & 0.3688 & 0.0660 \\
 &  & 0.70 & 0.3585 & 0.3113 & 0.4509 & 0.2681 & \textbf{0.5382} & 0.0474 & \textbf{0.2784} & 0.0504 & 0.5277 & 0.0500 & 0.2853 & 0.0481 & 0.4112 & 0.0683 & 0.3688 & 0.0660 \\
 &  & 0.80 & 0.3839 & 0.3109 & 0.4285 & 0.2671 & \textbf{0.5382} & 0.0474 & \textbf{0.2784} & 0.0504 & 0.5277 & 0.0500 & 0.2853 & 0.0481 & 0.4112 & 0.0683 & 0.3688 & 0.0660 \\
 &  & 0.90 & 0.3903 & 0.3181 & 0.4225 & 0.2748 & \textbf{0.5392} & 0.0436 & \textbf{0.2724} & 0.0411 & 0.5265 & 0.0553 & 0.2891 & 0.0510 & 0.4118 & 0.0675 & 0.3688 & 0.0660 \\
\cmidrule(lr){2-19}
 & 0.12 & 0.60 & 0.3585 & 0.3113 & 0.4509 & 0.2681 & \textbf{0.5382} & 0.0474 & \textbf{0.2784} & 0.0504 & 0.5277 & 0.0500 & 0.2853 & 0.0481 & 0.4112 & 0.0683 & 0.3688 & 0.0660 \\
 &  & 0.70 & 0.3585 & 0.3113 & 0.4509 & 0.2681 & \textbf{0.5382} & 0.0474 & \textbf{0.2784} & 0.0504 & 0.5277 & 0.0500 & 0.2853 & 0.0481 & 0.4112 & 0.0683 & 0.3688 & 0.0660 \\
 &  & 0.80 & 0.3911 & 0.3192 & 0.4223 & 0.2751 & \textbf{0.5329} & 0.0488 & \textbf{0.2822} & 0.0478 & 0.5240 & 0.0519 & 0.2918 & 0.0501 & 0.4112 & 0.0683 & 0.3688 & 0.0660 \\
 &  & 0.90 &  &  &  &  &  &  &  &  &  &  &  &  &  &  &  &  \\
\cmidrule(lr){2-19}
 & 0.15 & 0.60 & 0.3706 & 0.3115 & 0.4407 & 0.2685 & \textbf{0.5382} & 0.0474 & \textbf{0.2784} & 0.0504 & 0.5277 & 0.0500 & 0.2853 & 0.0481 & 0.4112 & 0.0683 & 0.3688 & 0.0660 \\
 &  & 0.70 & 0.3911 & 0.3192 & 0.4223 & 0.2751 & \textbf{0.5380} & 0.0493 & \textbf{0.2779} & 0.0493 & 0.5268 & 0.0511 & 0.2936 & 0.0531 & 0.4118 & 0.0675 & 0.3688 & 0.0660 \\
 &  & 0.80 &  &  &  &  &  &  &  &  &  &  &  &  &  &  &  &  \\
 &  & 0.90 &  &  &  &  &  &  &  &  &  &  &  &  &  &  &  &  \\
\midrule
opm2 & 0.10 & 0.60 & 0.3702 & 0.3147 & 0.4743 & 0.2789 & \textbf{0.5135} & 0.0483 & \textbf{0.3187} & 0.0558 & 0.5088 & 0.0495 & 0.3214 & 0.0555 & 0.4076 & 0.0694 & 0.4035 & 0.0641 \\
 &  & 0.70 & 0.3702 & 0.3147 & 0.4743 & 0.2789 & 0.5116 & 0.0478 & 0.3191 & 0.0531 & \textbf{0.5125} & 0.0409 & \textbf{0.3147} & 0.0499 & 0.4076 & 0.0694 & 0.4035 & 0.0641 \\
 &  & 0.80 & 0.3751 & 0.3228 & 0.4820 & 0.2801 & 0.5024 & 0.0437 & 0.3402 & 0.0520 & \textbf{0.5038} & 0.0447 & \textbf{0.3389} & 0.0485 & 0.4076 & 0.0694 & 0.4162 & 0.0633 \\
 &  & 0.90 & 0.4493 & 0.2806 & 0.3980 & 0.2471 & \textbf{0.4721} & 0.0505 & \textbf{0.3586} & 0.0583 & 0.4685 & 0.0516 & 0.3631 & 0.0556 & 0.3677 & 0.0672 & 0.4344 & 0.0614 \\
\cmidrule(lr){2-19}
 & 0.12 & 0.60 & 0.3597 & 0.3158 & 0.4874 & 0.2775 & \textbf{0.5001} & 0.0541 & \textbf{0.3216} & 0.0543 & 0.4898 & 0.0435 & 0.3367 & 0.0479 & 0.3936 & 0.0597 & 0.4064 & 0.0564 \\
 &  & 0.70 & 0.3639 & 0.3219 & 0.4828 & 0.2832 & \textbf{0.5013} & 0.0482 & \textbf{0.3173} & 0.0468 & 0.4961 & 0.0468 & 0.3306 & 0.0485 & 0.3936 & 0.0597 & 0.4064 & 0.0564 \\
 &  & 0.80 & 0.3897 & 0.3002 & 0.4571 & 0.2646 & \textbf{0.5000} & 0.0443 & \textbf{0.3292} & 0.0542 & 0.4870 & 0.0476 & 0.3371 & 0.0527 & 0.3915 & 0.0596 & 0.4077 & 0.0564 \\
 &  & 0.90 &  &  &  &  &  &  &  &  &  &  &  &  &  &  &  &  \\
\cmidrule(lr){2-19}
 & 0.15 & 0.60 & 0.3664 & 0.3255 & 0.4792 & 0.2855 & \textbf{0.5144} & 0.0545 & \textbf{0.3140} & 0.0528 & 0.5001 & 0.0537 & 0.3215 & 0.0497 & 0.3890 & 0.0523 & 0.4176 & 0.0469 \\
 &  & 0.70 & 0.4139 & 0.3115 & 0.4351 & 0.2694 & 0.5041 & 0.0477 & 0.3198 & 0.0549 & \textbf{0.5055} & 0.0426 & \textbf{0.3145} & 0.0413 & 0.3913 & 0.0578 & 0.4155 & 0.0522 \\
 &  & 0.80 &  &  &  &  &  &  &  &  &  &  &  &  &  &  &  &  \\
 &  & 0.90 &  &  &  &  &  &  &  &  &  &  &  &  &  &  &  &  \\
\midrule
opm5 & 0.10 & 0.60 & 0.3736 & 0.3049 & 0.4313 & 0.2533 & \textbf{0.5408} & 0.0577 & \textbf{0.2792} & 0.0476 & 0.5284 & 0.0534 & 0.2942 & 0.0497 & 0.4005 & 0.0558 & 0.3689 & 0.0504 \\
 &  & 0.70 & 0.3736 & 0.3049 & 0.4313 & 0.2533 & \textbf{0.5408} & 0.0577 & \textbf{0.2792} & 0.0476 & 0.5284 & 0.0534 & 0.2942 & 0.0497 & 0.4005 & 0.0558 & 0.3689 & 0.0504 \\
 &  & 0.80 & 0.3736 & 0.3049 & 0.4313 & 0.2533 & \textbf{0.5408} & 0.0577 & \textbf{0.2792} & 0.0476 & 0.5284 & 0.0534 & 0.2942 & 0.0497 & 0.4005 & 0.0558 & 0.3689 & 0.0504 \\
 &  & 0.90 & 0.3736 & 0.3049 & 0.4313 & 0.2533 & \textbf{0.5408} & 0.0577 & \textbf{0.2792} & 0.0476 & 0.5284 & 0.0534 & 0.2942 & 0.0497 & 0.4005 & 0.0558 & 0.3689 & 0.0504 \\
\cmidrule(lr){2-19}
 & 0.12 & 0.60 & 0.3738 & 0.3052 & 0.4239 & 0.2556 & 0.5254 & 0.0532 & 0.2916 & 0.0483 & \textbf{0.5311} & 0.0539 & \textbf{0.2854} & 0.0502 & 0.4175 & 0.0705 & 0.3552 & 0.0565 \\
 &  & 0.70 & 0.3738 & 0.3052 & 0.4239 & 0.2556 & 0.5254 & 0.0532 & 0.2916 & 0.0483 & \textbf{0.5311} & 0.0539 & \textbf{0.2854} & 0.0502 & 0.4175 & 0.0705 & 0.3552 & 0.0565 \\
 &  & 0.80 & 0.3738 & 0.3052 & 0.4239 & 0.2556 & 0.5254 & 0.0532 & 0.2916 & 0.0483 & \textbf{0.5311} & 0.0539 & \textbf{0.2854} & 0.0502 & 0.4175 & 0.0705 & 0.3552 & 0.0565 \\
 &  & 0.90 & 0.3738 & 0.3052 & 0.4239 & 0.2556 & 0.5254 & 0.0532 & 0.2916 & 0.0483 & \textbf{0.5311} & 0.0539 & \textbf{0.2854} & 0.0502 & 0.4175 & 0.0705 & 0.3552 & 0.0565 \\
\cmidrule(lr){2-19}
 & 0.15 & 0.60 & 0.3573 & 0.3085 & 0.4436 & 0.2647 & \textbf{0.5390} & 0.0480 & \textbf{0.2611} & 0.0439 & 0.5338 & 0.0469 & 0.2679 & 0.0426 & 0.4157 & 0.0589 & 0.3515 & 0.0563 \\
 &  & 0.70 & 0.3573 & 0.3085 & 0.4436 & 0.2647 & \textbf{0.5390} & 0.0480 & \textbf{0.2611} & 0.0439 & 0.5338 & 0.0469 & 0.2679 & 0.0426 & 0.4157 & 0.0589 & 0.3515 & 0.0563 \\
 &  & 0.80 & 0.3573 & 0.3085 & 0.4436 & 0.2647 & \textbf{0.5390} & 0.0480 & \textbf{0.2611} & 0.0439 & 0.5338 & 0.0469 & 0.2679 & 0.0426 & 0.4157 & 0.0589 & 0.3515 & 0.0563 \\
 &  & 0.90 & 0.3849 & 0.3138 & 0.4426 & 0.2828 & \textbf{0.5386} & 0.0482 & \textbf{0.2519} & 0.0414 & 0.5294 & 0.0514 & 0.2588 & 0.0414 & 0.4157 & 0.0589 & 0.3469 & 0.0586 \\
\midrule
opm6 & 0.10 & 0.60 & 0.3969 & 0.2990 & 0.4472 & 0.2628 & \textbf{0.5234} & 0.0560 & \textbf{0.3101} & 0.0554 & 0.5144 & 0.0498 & 0.3235 & 0.0566 & 0.4114 & 0.0597 & 0.4122 & 0.0596 \\
 &  & 0.70 & 0.3969 & 0.2990 & 0.4472 & 0.2628 & \textbf{0.5234} & 0.0560 & \textbf{0.3101} & 0.0554 & 0.5144 & 0.0498 & 0.3235 & 0.0566 & 0.4114 & 0.0597 & 0.4122 & 0.0596 \\
 &  & 0.80 & 0.3969 & 0.2990 & 0.4472 & 0.2628 & \textbf{0.5234} & 0.0560 & \textbf{0.3101} & 0.0554 & 0.5144 & 0.0498 & 0.3235 & 0.0566 & 0.4114 & 0.0597 & 0.4122 & 0.0596 \\
 &  & 0.90 & 0.3969 & 0.2990 & 0.4472 & 0.2628 & \textbf{0.5234} & 0.0560 & \textbf{0.3101} & 0.0554 & 0.5144 & 0.0498 & 0.3235 & 0.0566 & 0.4114 & 0.0597 & 0.4122 & 0.0596 \\
\cmidrule(lr){2-19}
 & 0.12 & 0.60 & 0.3686 & 0.3109 & 0.4758 & 0.2688 & \textbf{0.5175} & 0.0480 & \textbf{0.3197} & 0.0500 & 0.5121 & 0.0572 & 0.3261 & 0.0553 & 0.4066 & 0.0679 & 0.4211 & 0.0591 \\
 &  & 0.70 & 0.3686 & 0.3109 & 0.4758 & 0.2688 & \textbf{0.5175} & 0.0480 & \textbf{0.3197} & 0.0500 & 0.5121 & 0.0572 & 0.3261 & 0.0553 & 0.4066 & 0.0679 & 0.4211 & 0.0591 \\
 &  & 0.80 & 0.3686 & 0.3109 & 0.4758 & 0.2688 & \textbf{0.5175} & 0.0480 & \textbf{0.3197} & 0.0500 & 0.5121 & 0.0572 & 0.3261 & 0.0553 & 0.4066 & 0.0679 & 0.4211 & 0.0591 \\
 &  & 0.90 & 0.3686 & 0.3109 & 0.4758 & 0.2688 & \textbf{0.5175} & 0.0480 & \textbf{0.3197} & 0.0500 & 0.5121 & 0.0572 & 0.3261 & 0.0553 & 0.4066 & 0.0679 & 0.4211 & 0.0591 \\
\cmidrule(lr){2-19}
 & 0.15 & 0.60 & 0.3659 & 0.3093 & 0.4770 & 0.2681 & 0.5196 & 0.0540 & 0.3265 & 0.0577 & \textbf{0.5212} & 0.0558 & \textbf{0.3235} & 0.0477 & 0.4269 & 0.0584 & 0.4076 & 0.0594 \\
 &  & 0.70 & 0.3659 & 0.3093 & 0.4770 & 0.2681 & 0.5196 & 0.0540 & 0.3265 & 0.0577 & \textbf{0.5212} & 0.0558 & \textbf{0.3235} & 0.0477 & 0.4269 & 0.0584 & 0.4076 & 0.0594 \\
 &  & 0.80 & 0.3659 & 0.3093 & 0.4770 & 0.2681 & 0.5196 & 0.0540 & 0.3265 & 0.0577 & \textbf{0.5212} & 0.0558 & \textbf{0.3235} & 0.0477 & 0.4269 & 0.0584 & 0.4076 & 0.0594 \\
 &  & 0.90 & 0.3633 & 0.3149 & 0.4817 & 0.2720 & 0.5223 & 0.0541 & \textbf{0.3175} & 0.0507 & \textbf{0.5229} & 0.0513 & 0.3282 & 0.0488 & 0.4269 & 0.0584 & 0.4076 & 0.0594 \\
\bottomrule
\end{tabular}
}
\end{table*}

We further evaluate the quality of the obtained feasible Pareto-front approximations using the hypervolume (HV) and inverted generational distance (IGD) indicators~\cite{Zitzler2003}. Since the true Pareto front is unknown, an empirical reference front is constructed separately for each combination of instance, uncertainty setting, $\kappa$, and $\alpha$. For each configuration, the feasible non-dominated solutions obtained from all independent runs and all compared algorithms are pooled, and the non-dominated subset of their union is used as the empirical reference front. If a saved non-dominated front contains both feasible and infeasible solutions, only its feasible solutions are used in the indicator calculation.

The objective values are normalized using the minimum and maximum objective values observed in the pooled feasible solution set for the corresponding configuration. Since the objectives are represented in minimization form for the indicator calculation, the ore-production objective is negated, while the fleet-cost objective remains unchanged. HV is computed in the normalized objective space using the reference point $(1.10,1.10)$, which is chosen slightly outside the normalized objective range. IGD is computed with respect to the normalized empirical reference front. Higher HV and lower IGD values indicate better Pareto-front approximation quality. The reported values correspond to the mean and standard deviation over runs containing at least one feasible non-dominated solution.

To provide additional information on constraint satisfaction, we report the final-population feasibility ratio and the deterministic chance-constraint margin. The feasibility ratio is the proportion of solutions in the final population that satisfy both the operational constraints and the deterministic chance-constraint conditions. The chance-constraint margin is the remaining slack in Eqs.~\eqref{eq:soc_lower}-\eqref{eq:soc_upper} and larger positive values indicate that a solution lies further from the corresponding chance-constraint boundary.

\subsection{Experimental Results}

This section presents the experimental results for the proposed approach using GSEMO, NSGA-II, NSGA-III, and MOEA/D under the three uncertainty settings S1-S3. The algorithms are compared in terms of feasibility, best feasible ore production, quality of the obtained non-dominated fronts, and computational time.

Figure~\ref{fig:feasibility_ratio} shows that the ability to obtain feasible solutions depends on both $\alpha$ and $\kappa$. For smaller values of these parameters, NSGA-II, NSGA-III, and MOEA/D generally maintain high feasibility ratios. As $\alpha$ and $\kappa$ increase, the feasibility ratio decreases for several instances.

This effect is particularly visible for opm1 and opm2. Under S1, for example, NSGA-II, NSGA-III, and MOEA/D obtain feasible solutions in all 30 runs for opm1 with $\kappa=0.12$ and $\alpha\leq0.80$, whereas no feasible solution is obtained at $\alpha=0.90$. For $\kappa=0.15$, no feasible solution is obtained for $\alpha\geq0.80$. A similar pattern is observed for opm2.

GSEMO generally obtains feasible solutions less consistently than NSGA-II, NSGA-III, and MOEA/D. This difference is also observed in several configurations where the fixed-population algorithms obtain feasible solutions in all runs. Overall, across most configurations in which feasible solutions are found, the fixed-population algorithms are more consistent than GSEMO in obtaining feasible solutions.

The effect of $\alpha$ and $\kappa$ also depends on the uncertainty setting. In particular, some configurations under S3 show a stronger reduction in feasibility than the corresponding configurations under S1 and S2. This indicates that feasibility depends not only on the overall uncertainty level, but also on how the standard deviations are assigned to the ore fronts.

Figure~\ref{fig:robustness_margin} provides additional information on constraint satisfaction by showing the average deterministic chance-constraint margin. Positive values indicate that the corresponding deterministic chance-constraint conditions are satisfied with remaining slack, while values closer to zero indicate solutions closer to the constraint boundary.

The chance-constraint margin varies with $\alpha$, $\kappa$, and the uncertainty setting. In general, increasing the confidence level makes the chance constraints more restrictive. The differences among S1-S3 also show that the assignment of uncertainty to the ore fronts affects the distance of the obtained solutions from the constraint boundary. These observations are consistent with the feasibility behavior shown in Figure~\ref{fig:feasibility_ratio}.

Tables~\ref{tab:f1_setting_S1}-\ref{tab:f1_setting_S3} report the best feasible ore-production value $P(x)$ for S1, S2, and S3, respectively, for $\kappa\in\{0.10,0.12,0.15\}$ and $\alpha\in\{0.60,0.70,0.80,0.90\}$. The mean and standard deviation are computed over feasible runs only, with the corresponding feasible-run count also reported. Bold values indicate the best mean performance for each configuration, and statistical comparisons are provided for the production results.

NSGA-II and NSGA-III generally obtain the highest best feasible ore production. MOEA/D usually obtains lower values than NSGA-II and NSGA-III, while GSEMO also produces lower mean values in many configurations and has fewer feasible runs in several cases. The statistical comparisons show significant differences between NSGA-II or NSGA-III and the other algorithms for many configurations.

The best feasible ore-production value does not always decrease immediately as $\alpha$ increases. In some configurations, similar production values are obtained for several consecutive confidence levels while the number of feasible runs decreases. This shows that increasing $\alpha$ can affect feasibility before producing a clear change in the best feasible production value.

Table~\ref{tab:hv_igd_setting_S1} reports the HV and IGD results for S1. The corresponding S2 and S3 results are provided in the Appendix~\ref{app_cc}. Higher HV and lower IGD indicate better Pareto-front approximation quality, and bold values indicate the best mean performance for each configuration.

NSGA-II and NSGA-III generally obtain the best HV and IGD values, while the differences between these two algorithms are often small. The algorithm with the best mean HV or IGD varies across individual configurations. GSEMO generally obtains lower HV and higher IGD values. MOEA/D performs better than GSEMO in several configurations, but its overall front quality is generally lower than that of NSGA-II and NSGA-III.

For some consecutive values of $\alpha$, identical or very similar HV and IGD values are observed. This indicates that increasing the confidence level does not necessarily lead to an immediate change in the feasible Pareto-front approximation.

\begin{table}[!t]
\centering
\caption{Mean and standard deviation of computing time in seconds across all experimental configurations and independent runs.}
\label{tab:runtime}
\resizebox{0.6\textwidth}{!}{%
\begin{tabular}{l rr rr rr rr}
\toprule
Instance & \multicolumn{2}{c}{GSEMO} & \multicolumn{2}{c}{NSGA-II} & \multicolumn{2}{c}{NSGA-III} & \multicolumn{2}{c}{MOEA/D} \\
\cmidrule(lr){2-3} \cmidrule(lr){4-5} \cmidrule(lr){6-7} \cmidrule(lr){8-9}
 & Mean & Std & Mean & Std & Mean & Std & Mean & Std \\
\midrule
opm1 & 274.4 & 28.3 & 682.2 & 146.7 & 744.2 & 183.1 & 626.7 & 66.3 \\
opm2 & 266.9 & 22.5 & 670.2 & 120.3 & 754.1 & 185.6 & 644.0 & 107.9 \\
opm5 & 266.2 & 54.5 & 606.7 & 58.2 & 675.0 & 113.1 & 643.5 & 141.9 \\
opm6 & 255.5 & 27.8 & 630.9 & 95.6 & 698.9 & 119.0 & 615.6 & 100.8 \\
\bottomrule
\end{tabular}%
}
\end{table}

Table~\ref{tab:runtime} reports the computational time of the four algorithms. GSEMO has the lowest computational cost, with mean runtimes of approximately 256-274 seconds across the four instances. NSGA-II requires approximately 607-682 seconds, NSGA-III approximately 675-754 seconds, and MOEA/D approximately 616-644 seconds.

Although GSEMO is considerably faster, it generally obtains lower solution quality and lower feasibility. NSGA-II and NSGA-III generally obtain higher best feasible ore-production values and better HV and IGD results, although at a higher computational cost. NSGA-II is also consistently faster than NSGA-III while providing similar solution quality.

Overall, NSGA-II and NSGA-III provide the best solution quality, with similar best feasible production, HV, and IGD results, while NSGA-II requires less computational time. MOEA/D obtains feasible solutions reliably for many configurations but generally provides lower solution quality. GSEMO has the lowest computational cost but shows lower feasibility and solution quality in many cases. The results further show that the effect of uncertainty depends on both its magnitude and its relationship with the expected ore grades.

\section{Conclusion}
\label{sec:conclusion}

In this paper, we investigated a chance-constrained bi-objective OPMOP under uncertain ore grades, simultaneously maximizing ore production and minimizing fleet cost. Under independent normally distributed ore grades, we derived sufficient deterministic conditions for the two-sided chance constraints, enabling evaluation without sampling or simulation.

We evaluated the proposed approach using GSEMO, NSGA-II, NSGA-III, and MOEA/D and showed that NSGA-II and NSGA-III generally provide the best solution quality, with NSGA-II requiring less computational time. GSEMO is considerably faster but generally shows lower feasibility and solution quality. The results further show that the effect of uncertainty depends on both its magnitude and its relationship with the expected ore grades.

Future work will consider additional sources of uncertainty, such as equipment availability and travel times, together with adaptive constraint-handling and search strategies for restrictive chance-constrained settings.

\bibliographystyle{unsrtnat}
\bibliography{lib}

\appendix
\section{Additional HV and IGD Results}
\label{app_cc}

This appendix provides the additional hypervolume (HV) and inverted generational distance (IGD) results for uncertainty settings S2 and S3. The corresponding results for S1 are presented in the main manuscript. The experimental settings and calculation of HV and IGD are described in the main paper.

\begin{table*}[htbp]
\centering
\caption{Hypervolume and IGD results under uncertainty setting S2 for all values of $\kappa$. Mean and standard deviation are computed over feasible recovered non-dominated fronts. HV and IGD use a configuration-specific empirical reference front and normalization. Bold values indicate the best mean performance for the corresponding metric and row.}
\label{tab:hv_igd_setting_S2}
\resizebox{\textwidth}{!}{%
\begin{tabular}{l c c rrrr rrrr rrrr rrrr}
\toprule
Instance & $\kappa$ & $\alpha$ & \multicolumn{4}{c}{GSEMO} & \multicolumn{4}{c}{NSGA-II} & \multicolumn{4}{c}{NSGA-III} & \multicolumn{4}{c}{MOEA/D} \\
\cmidrule(lr){4-7} \cmidrule(lr){8-11} \cmidrule(lr){12-15} \cmidrule(lr){16-19}
 & & & \multicolumn{2}{c}{HV} & \multicolumn{2}{c}{IGD} & \multicolumn{2}{c}{HV} & \multicolumn{2}{c}{IGD} & \multicolumn{2}{c}{HV} & \multicolumn{2}{c}{IGD} & \multicolumn{2}{c}{HV} & \multicolumn{2}{c}{IGD} \\
\cmidrule(lr){4-5} \cmidrule(lr){6-7} \cmidrule(lr){8-9} \cmidrule(lr){10-11} \cmidrule(lr){12-13} \cmidrule(lr){14-15} \cmidrule(lr){16-17} \cmidrule(lr){18-19}
 & & & Mean & Std & Mean & Std & Mean & Std & Mean & Std & Mean & Std & Mean & Std & Mean & Std & Mean & Std \\
\midrule
opm1 & 0.10 & 0.60 & 0.3828 & 0.3269 & 0.5033 & 0.2993 & \textbf{0.5552} & 0.0523 & \textbf{0.2761} & 0.0333 & 0.5525 & 0.0503 & 0.2771 & 0.0321 & 0.4413 & 0.0747 & 0.3576 & 0.0635 \\
 &  & 0.70 & 0.3828 & 0.3269 & 0.5033 & 0.2993 & \textbf{0.5552} & 0.0523 & \textbf{0.2761} & 0.0333 & 0.5525 & 0.0503 & 0.2771 & 0.0321 & 0.4413 & 0.0747 & 0.3576 & 0.0635 \\
 &  & 0.80 & 0.3828 & 0.3269 & 0.5033 & 0.2993 & \textbf{0.5552} & 0.0523 & \textbf{0.2761} & 0.0333 & 0.5525 & 0.0503 & 0.2771 & 0.0321 & 0.4413 & 0.0747 & 0.3576 & 0.0635 \\
 &  & 0.90 & 0.4176 & 0.3413 & 0.5329 & 0.3094 & 0.5520 & 0.0431 & 0.3384 & 0.0297 & \textbf{0.5569} & 0.0397 & \textbf{0.3348} & 0.0306 & 0.4409 & 0.0744 & 0.4188 & 0.0653 \\
\cmidrule(lr){2-19}
 & 0.12 & 0.60 & 0.3828 & 0.3269 & 0.5033 & 0.2993 & \textbf{0.5552} & 0.0523 & \textbf{0.2761} & 0.0333 & 0.5525 & 0.0503 & 0.2771 & 0.0321 & 0.4413 & 0.0747 & 0.3576 & 0.0635 \\
 &  & 0.70 & 0.3828 & 0.3269 & 0.5033 & 0.2993 & \textbf{0.5552} & 0.0523 & \textbf{0.2761} & 0.0333 & 0.5525 & 0.0503 & 0.2771 & 0.0321 & 0.4413 & 0.0747 & 0.3576 & 0.0635 \\
 &  & 0.80 & 0.3824 & 0.3267 & 0.4970 & 0.2931 & \textbf{0.5593} & 0.0509 & \textbf{0.2848} & 0.0363 & 0.5557 & 0.0499 & 0.2876 & 0.0342 & 0.4409 & 0.0744 & 0.3655 & 0.0611 \\
 &  & 0.90 &  &  &  &  &  &  &  &  &  &  &  &  &  &  &  &  \\
\cmidrule(lr){2-19}
 & 0.15 & 0.60 & 0.3828 & 0.3269 & 0.5033 & 0.2993 & \textbf{0.5552} & 0.0523 & \textbf{0.2761} & 0.0333 & 0.5525 & 0.0503 & 0.2771 & 0.0321 & 0.4413 & 0.0747 & 0.3576 & 0.0635 \\
 &  & 0.70 & 0.3824 & 0.3267 & 0.4970 & 0.2931 & \textbf{0.5602} & 0.0518 & \textbf{0.2842} & 0.0372 & 0.5593 & 0.0516 & 0.2844 & 0.0328 & 0.4409 & 0.0744 & 0.3655 & 0.0611 \\
 &  & 0.80 &  &  &  &  &  &  &  &  &  &  &  &  &  &  &  &  \\
 &  & 0.90 &  &  &  &  &  &  &  &  &  &  &  &  &  &  &  &  \\
\midrule
opm2 & 0.10 & 0.60 & 0.3238 & 0.3040 & 0.4995 & 0.2651 & \textbf{0.4662} & 0.0454 & 0.3580 & 0.0377 & 0.4582 & 0.0435 & \textbf{0.3578} & 0.0350 & 0.3886 & 0.0435 & 0.4066 & 0.0388 \\
 &  & 0.70 & 0.3227 & 0.3021 & 0.4745 & 0.2663 & \textbf{0.4601} & 0.0436 & 0.3523 & 0.0370 & 0.4599 & 0.0487 & \textbf{0.3512} & 0.0424 & 0.3858 & 0.0490 & 0.3971 & 0.0436 \\
 &  & 0.80 & 0.3221 & 0.3025 & 0.4634 & 0.2601 & \textbf{0.4675} & 0.0418 & \textbf{0.3572} & 0.0433 & 0.4573 & 0.0484 & 0.3668 & 0.0426 & 0.3870 & 0.0487 & 0.4112 & 0.0435 \\
 &  & 0.90 & 0.3405 & 0.3084 & 0.4867 & 0.2719 & \textbf{0.4634} & 0.0432 & 0.3802 & 0.0393 & 0.4583 & 0.0433 & \textbf{0.3786} & 0.0396 & 0.3791 & 0.0515 & 0.4277 & 0.0453 \\
\cmidrule(lr){2-19}
 & 0.12 & 0.60 & 0.3332 & 0.3037 & 0.4653 & 0.2660 & 0.4546 & 0.0409 & 0.3515 & 0.0385 & \textbf{0.4586} & 0.0434 & \textbf{0.3457} & 0.0402 & 0.3926 & 0.0527 & 0.3957 & 0.0451 \\
 &  & 0.70 & 0.3342 & 0.3045 & 0.4523 & 0.2615 & 0.4638 & 0.0436 & 0.3677 & 0.0446 & \textbf{0.4658} & 0.0453 & \textbf{0.3667} & 0.0459 & 0.3904 & 0.0533 & 0.4082 & 0.0493 \\
 &  & 0.80 & 0.3579 & 0.3204 & 0.4271 & 0.2484 & \textbf{0.5632} & 0.0461 & \textbf{0.3149} & 0.0462 & 0.5528 & 0.0526 & 0.3182 & 0.0457 & 0.4721 & 0.0614 & 0.3620 & 0.0451 \\
 &  & 0.90 &  &  &  &  &  &  &  &  &  &  &  &  &  &  &  &  \\
\cmidrule(lr){2-19}
 & 0.15 & 0.60 & 0.3322 & 0.3036 & 0.4579 & 0.2610 & \textbf{0.4655} & 0.0429 & 0.3614 & 0.0429 & 0.4642 & 0.0517 & \textbf{0.3579} & 0.0485 & 0.3914 & 0.0398 & 0.4099 & 0.0442 \\
 &  & 0.70 & 0.3318 & 0.2939 & 0.4561 & 0.2660 & 0.4674 & 0.0469 & 0.3078 & 0.0399 & \textbf{0.4685} & 0.0425 & \textbf{0.3077} & 0.0342 & 0.3942 & 0.0472 & 0.3569 & 0.0356 \\
 &  & 0.80 &  &  &  &  &  &  &  &  &  &  &  &  &  &  &  &  \\
 &  & 0.90 &  &  &  &  &  &  &  &  &  &  &  &  &  &  &  &  \\
\midrule
opm5 & 0.10 & 0.60 & 0.3828 & 0.3269 & 0.5033 & 0.2993 & \textbf{0.5552} & 0.0523 & \textbf{0.2761} & 0.0333 & 0.5525 & 0.0503 & 0.2771 & 0.0321 & 0.4413 & 0.0747 & 0.3576 & 0.0635 \\
 &  & 0.70 & 0.3828 & 0.3269 & 0.5033 & 0.2993 & \textbf{0.5552} & 0.0523 & \textbf{0.2761} & 0.0333 & 0.5525 & 0.0503 & 0.2771 & 0.0321 & 0.4413 & 0.0747 & 0.3576 & 0.0635 \\
 &  & 0.80 & 0.3828 & 0.3269 & 0.5033 & 0.2993 & \textbf{0.5552} & 0.0523 & \textbf{0.2761} & 0.0333 & 0.5525 & 0.0503 & 0.2771 & 0.0321 & 0.4413 & 0.0747 & 0.3576 & 0.0635 \\
 &  & 0.90 & 0.3828 & 0.3269 & 0.5033 & 0.2993 & \textbf{0.5552} & 0.0523 & \textbf{0.2761} & 0.0333 & 0.5525 & 0.0503 & 0.2771 & 0.0321 & 0.4413 & 0.0747 & 0.3576 & 0.0635 \\
\cmidrule(lr){2-19}
 & 0.12 & 0.60 & 0.3828 & 0.3269 & 0.5033 & 0.2993 & \textbf{0.5552} & 0.0523 & \textbf{0.2761} & 0.0333 & 0.5525 & 0.0503 & 0.2771 & 0.0321 & 0.4413 & 0.0747 & 0.3576 & 0.0635 \\
 &  & 0.70 & 0.3828 & 0.3269 & 0.5033 & 0.2993 & \textbf{0.5552} & 0.0523 & \textbf{0.2761} & 0.0333 & 0.5525 & 0.0503 & 0.2771 & 0.0321 & 0.4413 & 0.0747 & 0.3576 & 0.0635 \\
 &  & 0.80 & 0.3828 & 0.3269 & 0.5033 & 0.2993 & \textbf{0.5552} & 0.0523 & \textbf{0.2761} & 0.0333 & 0.5525 & 0.0503 & 0.2771 & 0.0321 & 0.4413 & 0.0747 & 0.3576 & 0.0635 \\
 &  & 0.90 & 0.3828 & 0.3269 & 0.5033 & 0.2993 & \textbf{0.5552} & 0.0523 & \textbf{0.2761} & 0.0333 & 0.5525 & 0.0503 & 0.2771 & 0.0321 & 0.4413 & 0.0747 & 0.3576 & 0.0635 \\
\cmidrule(lr){2-19}
 & 0.15 & 0.60 & 0.3828 & 0.3269 & 0.5033 & 0.2993 & \textbf{0.5552} & 0.0523 & \textbf{0.2761} & 0.0333 & 0.5525 & 0.0503 & 0.2771 & 0.0321 & 0.4413 & 0.0747 & 0.3576 & 0.0635 \\
 &  & 0.70 & 0.3828 & 0.3269 & 0.5033 & 0.2993 & \textbf{0.5552} & 0.0523 & \textbf{0.2761} & 0.0333 & 0.5525 & 0.0503 & 0.2771 & 0.0321 & 0.4413 & 0.0747 & 0.3576 & 0.0635 \\
 &  & 0.80 & 0.3828 & 0.3269 & 0.5033 & 0.2993 & \textbf{0.5552} & 0.0523 & \textbf{0.2761} & 0.0333 & 0.5525 & 0.0503 & 0.2771 & 0.0321 & 0.4413 & 0.0747 & 0.3576 & 0.0635 \\
 &  & 0.90 & 0.3828 & 0.3269 & 0.5033 & 0.2993 & \textbf{0.5522} & 0.0533 & \textbf{0.2787} & 0.0352 & 0.5519 & 0.0522 & 0.2787 & 0.0342 & 0.4413 & 0.0747 & 0.3576 & 0.0635 \\
\midrule
opm6 & 0.10 & 0.60 & 0.3399 & 0.3198 & 0.5053 & 0.2754 & \textbf{0.5014} & 0.0511 & \textbf{0.3443} & 0.0449 & 0.4824 & 0.0462 & 0.3597 & 0.0405 & 0.3975 & 0.0515 & 0.4152 & 0.0421 \\
 &  & 0.70 & 0.3399 & 0.3198 & 0.5053 & 0.2754 & \textbf{0.5014} & 0.0511 & \textbf{0.3443} & 0.0449 & 0.4824 & 0.0462 & 0.3597 & 0.0405 & 0.3975 & 0.0515 & 0.4152 & 0.0421 \\
 &  & 0.80 & 0.3399 & 0.3198 & 0.5053 & 0.2754 & \textbf{0.4972} & 0.0487 & \textbf{0.3486} & 0.0430 & 0.4868 & 0.0435 & 0.3553 & 0.0397 & 0.3975 & 0.0515 & 0.4152 & 0.0421 \\
 &  & 0.90 & 0.3350 & 0.3277 & 0.5103 & 0.2788 & \textbf{0.4858} & 0.0503 & 0.3573 & 0.0448 & 0.4833 & 0.0526 & \textbf{0.3559} & 0.0463 & 0.3949 & 0.0527 & 0.4192 & 0.0426 \\
\cmidrule(lr){2-19}
 & 0.12 & 0.60 & 0.3439 & 0.3224 & 0.5040 & 0.2833 & \textbf{0.4906} & 0.0492 & \textbf{0.3483} & 0.0402 & 0.4857 & 0.0438 & 0.3568 & 0.0391 & 0.3929 & 0.0605 & 0.4174 & 0.0499 \\
 &  & 0.70 & 0.3439 & 0.3224 & 0.5040 & 0.2833 & \textbf{0.4912} & 0.0498 & \textbf{0.3479} & 0.0405 & 0.4863 & 0.0460 & 0.3567 & 0.0404 & 0.3929 & 0.0605 & 0.4174 & 0.0499 \\
 &  & 0.80 & 0.3419 & 0.3195 & 0.5048 & 0.2822 & \textbf{0.4965} & 0.0377 & \textbf{0.3446} & 0.0329 & 0.4823 & 0.0521 & 0.3591 & 0.0430 & 0.3929 & 0.0605 & 0.4174 & 0.0499 \\
 &  & 0.90 & 0.3541 & 0.3499 & 0.5063 & 0.3100 & \textbf{0.3694} & 0.0891 & \textbf{0.4049} & 0.0881 & 0.3523 & 0.0961 & 0.4160 & 0.0915 & 0.2738 & 0.0860 & 0.4940 & 0.0928 \\
\cmidrule(lr){2-19}
 & 0.15 & 0.60 & 0.3528 & 0.3374 & 0.5530 & 0.3035 & 0.4903 & 0.0485 & 0.3428 & 0.0377 & \textbf{0.4923} & 0.0442 & \textbf{0.3414} & 0.0335 & 0.3866 & 0.0686 & 0.4249 & 0.0658 \\
 &  & 0.70 & 0.3391 & 0.3167 & 0.5067 & 0.2802 & \textbf{0.4981} & 0.0465 & \textbf{0.3433} & 0.0372 & 0.4888 & 0.0535 & 0.3559 & 0.0448 & 0.3930 & 0.0594 & 0.4207 & 0.0461 \\
 &  & 0.80 & 0.3716 & 0.3831 & 0.5456 & 0.3226 & \textbf{0.4203} & 0.0765 & \textbf{0.4387} & 0.0655 & 0.4151 & 0.0721 & 0.4412 & 0.0607 & 0.3073 & 0.1006 & 0.5359 & 0.0904 \\
 &  & 0.90 & 0.2536 & 0.2488 & 0.4504 & 0.2516 & \textbf{0.4386} & 0.0885 & \textbf{0.2716} & 0.0717 & 0.4225 & 0.0823 & 0.2817 & 0.0647 & 0.3295 & 0.1078 & 0.3577 & 0.0848 \\
\bottomrule
\end{tabular}%
}
\end{table*}

\begin{table*}[!t]
\centering
\caption{Hypervolume and IGD results under uncertainty setting S3 for all values of $\kappa$. Mean and standard deviation are computed over feasible recovered non-dominated fronts. HV and IGD use a configuration-specific empirical reference front and normalization. Bold values indicate the best mean performance for the corresponding metric and row.}
\label{tab:hv_igd_setting_S3}
\resizebox{\textwidth}{!}{%
\begin{tabular}{l c c rrrr rrrr rrrr rrrr}
\toprule
Instance & $\kappa$ & $\alpha$ & \multicolumn{4}{c}{GSEMO} & \multicolumn{4}{c}{NSGA-II} & \multicolumn{4}{c}{NSGA-III} & \multicolumn{4}{c}{MOEA/D} \\
\cmidrule(lr){4-7} \cmidrule(lr){8-11} \cmidrule(lr){12-15} \cmidrule(lr){16-19}
 & & & \multicolumn{2}{c}{HV} & \multicolumn{2}{c}{IGD} & \multicolumn{2}{c}{HV} & \multicolumn{2}{c}{IGD} & \multicolumn{2}{c}{HV} & \multicolumn{2}{c}{IGD} & \multicolumn{2}{c}{HV} & \multicolumn{2}{c}{IGD} \\
\cmidrule(lr){4-5} \cmidrule(lr){6-7} \cmidrule(lr){8-9} \cmidrule(lr){10-11} \cmidrule(lr){12-13} \cmidrule(lr){14-15} \cmidrule(lr){16-17} \cmidrule(lr){18-19}
 & & & Mean & Std & Mean & Std & Mean & Std & Mean & Std & Mean & Std & Mean & Std & Mean & Std & Mean & Std \\
\midrule
opm1 & 0.10 & 0.60 & 0.3739 & 0.3074 & 0.4394 & 0.2757 & \textbf{0.5260} & 0.0515 & \textbf{0.2836} & 0.0419 & 0.5201 & 0.0587 & 0.2849 & 0.0462 & 0.4134 & 0.0643 & 0.3621 & 0.0503 \\
 &  & 0.70 & 0.3739 & 0.3074 & 0.4394 & 0.2757 & \textbf{0.5260} & 0.0515 & \textbf{0.2836} & 0.0419 & 0.5201 & 0.0587 & 0.2849 & 0.0462 & 0.4134 & 0.0643 & 0.3621 & 0.0503 \\
 &  & 0.80 & 0.3739 & 0.3074 & 0.4394 & 0.2757 & \textbf{0.5260} & 0.0515 & \textbf{0.2836} & 0.0419 & 0.5201 & 0.0587 & 0.2849 & 0.0462 & 0.4134 & 0.0643 & 0.3621 & 0.0503 \\
 &  & 0.90 & 0.3845 & 0.2946 & 0.4185 & 0.2682 & \textbf{0.5382} & 0.0523 & \textbf{0.2521} & 0.0456 & 0.5205 & 0.0511 & 0.2687 & 0.0480 & 0.4134 & 0.0643 & 0.3520 & 0.0552 \\
\cmidrule(lr){2-19}
 & 0.12 & 0.60 & 0.3739 & 0.3074 & 0.4394 & 0.2757 & \textbf{0.5260} & 0.0515 & \textbf{0.2836} & 0.0419 & 0.5201 & 0.0587 & 0.2849 & 0.0462 & 0.4134 & 0.0643 & 0.3621 & 0.0503 \\
 &  & 0.70 & 0.3739 & 0.3074 & 0.4394 & 0.2757 & \textbf{0.5260} & 0.0515 & \textbf{0.2836} & 0.0419 & 0.5201 & 0.0587 & 0.2849 & 0.0462 & 0.4134 & 0.0643 & 0.3621 & 0.0503 \\
 &  & 0.80 & 0.3763 & 0.3064 & 0.4090 & 0.2701 & \textbf{0.5294} & 0.0601 & \textbf{0.2740} & 0.0514 & 0.5162 & 0.0532 & 0.2868 & 0.0449 & 0.4134 & 0.0643 & 0.3566 & 0.0454 \\
 &  & 0.90 &  &  &  &  &  &  &  &  &  &  &  &  &  &  &  &  \\
\cmidrule(lr){2-19}
 & 0.15 & 0.60 & 0.3739 & 0.3074 & 0.4394 & 0.2757 & \textbf{0.5260} & 0.0515 & \textbf{0.2836} & 0.0419 & 0.5201 & 0.0587 & 0.2849 & 0.0462 & 0.4134 & 0.0643 & 0.3621 & 0.0503 \\
 &  & 0.70 & 0.3763 & 0.3064 & 0.4090 & 0.2701 & 0.5246 & 0.0563 & 0.2782 & 0.0495 & \textbf{0.5268} & 0.0573 & \textbf{0.2779} & 0.0466 & 0.4134 & 0.0643 & 0.3566 & 0.0454 \\
 &  & 0.80 &  &  &  &  &  &  &  &  &  &  &  &  &  &  &  &  \\
 &  & 0.90 &  &  &  &  &  &  &  &  &  &  &  &  &  &  &  &  \\
\midrule
opm2 & 0.10 & 0.60 & 0.4057 & 0.3231 & 0.4606 & 0.2835 & \textbf{0.5338} & 0.0508 & \textbf{0.2992} & 0.0418 & 0.5232 & 0.0463 & 0.3082 & 0.0432 & 0.4262 & 0.0599 & 0.3713 & 0.0556 \\
 &  & 0.70 & 0.4057 & 0.3231 & 0.4606 & 0.2835 & \textbf{0.5338} & 0.0508 & \textbf{0.2992} & 0.0418 & 0.5232 & 0.0463 & 0.3082 & 0.0432 & 0.4262 & 0.0599 & 0.3713 & 0.0556 \\
 &  & 0.80 & 0.4054 & 0.3234 & 0.4614 & 0.2845 & \textbf{0.5346} & 0.0503 & \textbf{0.2966} & 0.0403 & 0.5244 & 0.0457 & 0.3088 & 0.0433 & 0.4351 & 0.0555 & 0.3633 & 0.0520 \\
 &  & 0.90 & \textbf{0.4754} & 0.3872 & \textbf{0.3405} & 0.3255 &  &  &  &  &  &  &  &  &  &  &  &  \\
\cmidrule(lr){2-19}
 & 0.12 & 0.60 & 0.4201 & 0.3143 & 0.4451 & 0.2760 & \textbf{0.5340} & 0.0429 & \textbf{0.2972} & 0.0357 & 0.5320 & 0.0462 & 0.3002 & 0.0407 & 0.4219 & 0.0541 & 0.3751 & 0.0485 \\
 &  & 0.70 & 0.4201 & 0.3143 & 0.4451 & 0.2760 & 0.5328 & 0.0425 & 0.2982 & 0.0366 & \textbf{0.5343} & 0.0450 & \textbf{0.2959} & 0.0376 & 0.4219 & 0.0541 & 0.3751 & 0.0485 \\
 &  & 0.80 & \textbf{0.3493} & 0.3308 & \textbf{0.5215} & 0.2780 & 0.0733 & 0.0635 & 0.8115 & 0.1115 & 0.0917 & 0.0778 & 0.7793 & 0.1365 & 0.0367 & 0.0000 & 0.8758 & 0.0000 \\
 &  & 0.90 &  &  &  &  &  &  &  &  &  &  &  &  &  &  &  &  \\
\cmidrule(lr){2-19}
 & 0.15 & 0.60 & 0.3880 & 0.3015 & 0.4390 & 0.2686 & \textbf{0.5122} & 0.0416 & \textbf{0.2938} & 0.0418 & 0.5067 & 0.0508 & 0.2978 & 0.0466 & 0.4053 & 0.0473 & 0.3726 & 0.0497 \\
 &  & 0.70 & \textbf{0.3619} & 0.3553 & \textbf{0.4654} & 0.3269 &  &  &  &  &  &  &  &  &  &  &  &  \\
 &  & 0.80 &  &  &  &  &  &  &  &  &  &  &  &  &  &  &  &  \\
 &  & 0.90 &  &  &  &  &  &  &  &  &  &  &  &  &  &  &  &  \\
\midrule
opm5 & 0.10 & 0.60 & 0.3739 & 0.3074 & 0.4394 & 0.2757 & \textbf{0.5260} & 0.0515 & \textbf{0.2836} & 0.0419 & 0.5201 & 0.0587 & 0.2849 & 0.0462 & 0.4134 & 0.0643 & 0.3621 & 0.0503 \\
 &  & 0.70 & 0.3739 & 0.3074 & 0.4394 & 0.2757 & \textbf{0.5260} & 0.0515 & \textbf{0.2836} & 0.0419 & 0.5201 & 0.0587 & 0.2849 & 0.0462 & 0.4134 & 0.0643 & 0.3621 & 0.0503 \\
 &  & 0.80 & 0.3739 & 0.3074 & 0.4394 & 0.2757 & \textbf{0.5260} & 0.0515 & \textbf{0.2836} & 0.0419 & 0.5201 & 0.0587 & 0.2849 & 0.0462 & 0.4134 & 0.0643 & 0.3621 & 0.0503 \\
 &  & 0.90 & 0.3739 & 0.3074 & 0.4394 & 0.2757 & \textbf{0.5260} & 0.0515 & \textbf{0.2836} & 0.0419 & 0.5201 & 0.0587 & 0.2849 & 0.0462 & 0.4134 & 0.0643 & 0.3621 & 0.0503 \\
\cmidrule(lr){2-19}
 & 0.12 & 0.60 & 0.3739 & 0.3074 & 0.4394 & 0.2757 & \textbf{0.5260} & 0.0515 & \textbf{0.2836} & 0.0419 & 0.5201 & 0.0587 & 0.2849 & 0.0462 & 0.4134 & 0.0643 & 0.3621 & 0.0503 \\
 &  & 0.70 & 0.3739 & 0.3074 & 0.4394 & 0.2757 & \textbf{0.5260} & 0.0515 & \textbf{0.2836} & 0.0419 & 0.5201 & 0.0587 & 0.2849 & 0.0462 & 0.4134 & 0.0643 & 0.3621 & 0.0503 \\
 &  & 0.80 & 0.3739 & 0.3074 & 0.4394 & 0.2757 & \textbf{0.5260} & 0.0515 & \textbf{0.2836} & 0.0419 & 0.5201 & 0.0587 & 0.2849 & 0.0462 & 0.4134 & 0.0643 & 0.3621 & 0.0503 \\
 &  & 0.90 & 0.3739 & 0.3074 & 0.4394 & 0.2757 & \textbf{0.5260} & 0.0515 & \textbf{0.2836} & 0.0419 & 0.5201 & 0.0587 & 0.2849 & 0.0462 & 0.4134 & 0.0643 & 0.3621 & 0.0503 \\
\cmidrule(lr){2-19}
 & 0.15 & 0.60 & 0.3739 & 0.3074 & 0.4394 & 0.2757 & \textbf{0.5260} & 0.0515 & \textbf{0.2836} & 0.0419 & 0.5201 & 0.0587 & 0.2849 & 0.0462 & 0.4134 & 0.0643 & 0.3621 & 0.0503 \\
 &  & 0.70 & 0.3739 & 0.3074 & 0.4394 & 0.2757 & \textbf{0.5260} & 0.0515 & \textbf{0.2836} & 0.0419 & 0.5201 & 0.0587 & 0.2849 & 0.0462 & 0.4134 & 0.0643 & 0.3621 & 0.0503 \\
 &  & 0.80 & 0.3739 & 0.3074 & 0.4394 & 0.2757 & \textbf{0.5260} & 0.0515 & \textbf{0.2836} & 0.0419 & 0.5201 & 0.0587 & 0.2849 & 0.0462 & 0.4134 & 0.0643 & 0.3621 & 0.0503 \\
 &  & 0.90 & 0.3763 & 0.3064 & 0.4090 & 0.2701 & \textbf{0.5268} & 0.0531 & \textbf{0.2761} & 0.0431 & 0.5170 & 0.0532 & 0.2873 & 0.0466 & 0.4134 & 0.0643 & 0.3566 & 0.0454 \\
\midrule
opm6 & 0.10 & 0.60 & 0.3881 & 0.3342 & 0.5087 & 0.2923 & \textbf{0.4812} & 0.0474 & \textbf{0.3849} & 0.0369 & 0.4719 & 0.0518 & 0.3925 & 0.0439 & 0.3871 & 0.0568 & 0.4499 & 0.0508 \\
 &  & 0.70 & 0.3881 & 0.3342 & 0.5087 & 0.2923 & \textbf{0.4812} & 0.0474 & \textbf{0.3849} & 0.0369 & 0.4719 & 0.0518 & 0.3925 & 0.0439 & 0.3871 & 0.0568 & 0.4499 & 0.0508 \\
 &  & 0.80 & 0.3881 & 0.3342 & 0.5087 & 0.2923 & \textbf{0.4812} & 0.0474 & \textbf{0.3849} & 0.0369 & 0.4719 & 0.0518 & 0.3925 & 0.0439 & 0.3871 & 0.0568 & 0.4499 & 0.0508 \\
 &  & 0.90 & 0.3881 & 0.3342 & 0.5087 & 0.2923 & \textbf{0.4824} & 0.0470 & \textbf{0.3837} & 0.0364 & 0.4730 & 0.0496 & 0.3909 & 0.0417 & 0.3871 & 0.0568 & 0.4499 & 0.0508 \\
\cmidrule(lr){2-19}
 & 0.12 & 0.60 & 0.3668 & 0.3184 & 0.4671 & 0.2691 & \textbf{0.5252} & 0.0480 & \textbf{0.3016} & 0.0421 & 0.5117 & 0.0589 & 0.3097 & 0.0476 & 0.4294 & 0.0758 & 0.3723 & 0.0603 \\
 &  & 0.70 & 0.3668 & 0.3184 & 0.4671 & 0.2691 & \textbf{0.5252} & 0.0480 & \textbf{0.3016} & 0.0421 & 0.5117 & 0.0589 & 0.3097 & 0.0476 & 0.4294 & 0.0758 & 0.3723 & 0.0603 \\
 &  & 0.80 & 0.3668 & 0.3184 & 0.4671 & 0.2691 & \textbf{0.5252} & 0.0480 & \textbf{0.3016} & 0.0421 & 0.5117 & 0.0589 & 0.3097 & 0.0476 & 0.4294 & 0.0758 & 0.3723 & 0.0603 \\
 &  & 0.90 & 0.3794 & 0.3184 & 0.4558 & 0.2682 & \textbf{0.5183} & 0.0413 & 0.3083 & 0.0410 & 0.5153 & 0.0467 & \textbf{0.3076} & 0.0399 & 0.4260 & 0.0733 & 0.3768 & 0.0583 \\
\cmidrule(lr){2-19}
 & 0.15 & 0.60 & 0.3816 & 0.3233 & 0.4549 & 0.2719 & 0.5197 & 0.0484 & 0.3102 & 0.0442 & \textbf{0.5235} & 0.0534 & \textbf{0.3046} & 0.0457 & 0.4193 & 0.0658 & 0.3836 & 0.0555 \\
 &  & 0.70 & 0.3816 & 0.3233 & 0.4549 & 0.2719 & 0.5170 & 0.0488 & 0.3100 & 0.0422 & \textbf{0.5293} & 0.0521 & \textbf{0.2992} & 0.0454 & 0.4193 & 0.0658 & 0.3836 & 0.0555 \\
 &  & 0.80 & 0.3953 & 0.3228 & 0.4425 & 0.2705 & 0.5193 & 0.0505 & 0.3095 & 0.0439 & \textbf{0.5287} & 0.0552 & \textbf{0.3012} & 0.0500 & 0.4193 & 0.0658 & 0.3836 & 0.0555 \\
 &  & 0.90 & \textbf{0.4767} & 0.6128 & \textbf{0.4191} & 0.5927 &  &  &  &  &  &  &  &  &  &  &  &  \\
\bottomrule
\end{tabular}%
}
\end{table*}

\end{document}